%% file: main.tex
\documentclass{article}
\usepackage{iclr2027_conference,times}
\usepackage[T1]{fontenc}
\usepackage{microtype,graphicx,booktabs,array}
\usepackage{amsmath,amssymb,amsthm,mathtools}
\usepackage{algorithm,algpseudocode,placeins}
\usepackage{wrapfig,needspace,float,xcolor,hyperref,zref-savepos}
\definecolor{paper}{HTML}{FAF6EF}
\definecolor{ink}{HTML}{2B2D42}
\definecolor{method}{HTML}{7B2FBE}
\definecolor{referenceblue}{HTML}{4A6FA5}
\hypersetup{colorlinks=true,allcolors=referenceblue,pdftitle={Cost-free Spectral Estimation for Adaptive Newton--Schulz in Matrix Optimizers}}
\usepackage[capitalize,noabbrev]{cleveref}
\usepackage{etoolbox}
\AtBeginEnvironment{table}{\setlength{\belowcaptionskip}{4pt}}
\graphicspath{{figures/}}
\newtheorem{theorem}{Theorem}
\newtheorem{proposition}{Proposition}

\theoremstyle{remark}
\DeclareMathOperator{\tr}{tr}
\DeclareMathOperator{\polar}{polar}
\DeclareMathOperator{\diag}{diag}
\DeclareMathOperator*{\argmin}{arg\,min}
\DeclareMathOperator*{\argmax}{arg\,max}
\newcommand{\R}{\mathbb R}
\newcommand{\eps}{\varepsilon}
\newcommand{\dd}{\,\mathrm d}
\newcommand{\wh}{\widehat w}
\newcommand{\yh}{\widehat y}
\newcommand{\Eh}{\widehat E}
\newcommand{\region}{\mathcal R}
\newcommand{\PE}{\mathrm{PE}}
\newcommand{\best}[1]{\boldsymbol{#1}}
\newlength{\wrapclearance}
\newcommand{\clearrefbox}{\par
  \setlength{\wrapclearance}{\dimexpr\zposy{nstextend}sp-\zposy{nsboxend}sp\relax}%
  \ifdim\wrapclearance>0pt\vspace*{\wrapclearance}\fi}
\title{Cost-free Spectral Estimation for Adaptive Newton--Schulz in Matrix Optimizers}

\author{Kristi Topollai \\
New York University\\
\texttt{kt2664@nyu.edu} \And
  Anna Choromanska \\
  New York University \\
  \texttt{ac5455@nyu.edu} \\
}

\iclrfinalcopy
\begin{document}
\maketitle
\lhead{}                            

\begin{abstract}
Matrix optimizers such as Muon transform each momentum matrix through an approximate orthogonalization, typically implemented by a small number of Newton--Schulz matrix multiplications.  The quality and cost of this approximation depend strongly on the singular-value spectrum of its input, yet existing implementations use the same fixed polynomial routine for every layer and throughout training.  We show that this uniform treatment is unnecessary: the computations in the Newton--Schulz method already reveal enough information to make the method adaptive.  The Gram matrices formed inside Newton--Schulz iterations yield spectral moments through inexpensive scalar reductions, requiring no additional matrix multiplications.  From these moments, we recover an estimate of the empirical singular-value distribution and use it to select a polynomial routine specialized to the current matrix.  This turns Newton--Schulz orthogonalization into a spectrum-adaptive procedure that responds to differences across both layers and training time.  On saved momentum matrices, spectral estimation substantially reduces orthogonalization error at a fixed iteration budget or reaches the same accuracy with fewer iterations, and in GPT pretraining up to 1B parameters it lowers the validation loss of two matrix optimizers.  Our results suggest that matrix-function operations inside optimizers need not be designed for a conservative worst-case spectrum: they can cheaply measure the spectrum they are already processing and specialize computations accordingly.
\end{abstract}

\section{Introduction}
\label{sec:intro}

Matrix optimizers are an increasingly practical alternative to elementwise
optimization for neural-network training. Methods such as Shampoo
\citep{gupta2018shampoo,anil2020scalable}, SOAP \citep{vyas2025soap},
Dion \citep{ahn2025dion}, and Muon
\citep{jordan2024muon,liu2025muon} exploit matrix structure in gradients,
momentum, or preconditioning statistics rather than treating every parameter
independently. Muon has attracted substantial attention for
language-model pretraining, where it approximately orthogonalizes momentum
matrices before using them as updates. Recent large-scale systems have adopted
Muon or closely related matrix-optimization ideas, including DeepSeek-V4, GLM-5, and
Kimi K3 \citep{deepseek2026v4,glm5team2026glm5,kimi2026k3}, and practical studies show
that Muon is competitive at language-model scale
\citep{essential2025practical,liu2025muon}.

\begin{figure}[t]
\centering
\includegraphics[width=\textwidth]{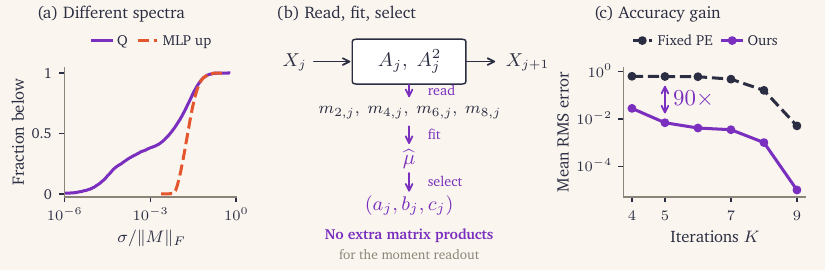}
\caption{\textbf{Newton--Schulz can measure its input and specialize its own computation.}
\textbf{(a)} Full singular-value CDFs show different momentum spectra for
attention Q and MLP-up at layer $5$, step $16$k.
\textbf{(b)} Read moments from existing matrix products, fit the spectrum, and
select a routine for subsequent steps.
\textbf{(c)} On $540$ saved MLP matrices, spectral estimation reduces mean
polar RMS error by $90\times$ relative to fixed Polar Express (PE) at $K=5$ iterations, scored above $10^{-5}$ in
Frobenius-normalized coordinates.}
\label{fig:overview}
\end{figure}

This growing use of matrix optimizers makes the cost of their internal matrix
functions increasingly important. Muon approximately computes the polar factor
of each momentum matrix with a small number of Newton--Schulz iterations, replacing an SVD
with hardware-efficient matrix multiplications
\citep{higham2008functions,amsel2026polar}. Online KL Shampoo (OKLS)
\citep{okls2026} similarly computes the inverse square roots of its
preconditioner statistics with a coupled Newton--Schulz iteration. Recent work improves these
iterations through better polynomial design and scaling
\citep{cans2025,amsel2026polar}, but typically applies the same routine across
matrices. This can be wasteful because the difficulty of the approximation
depends strongly on the input spectrum, and momentum spectra vary across matrix
types, layers, and training time, with attention often substantially broader
than MLPs. A conservative routine that handles the hardest cases can spend unnecessary iterations on easier ones.

Several recent methods have begun to exploit this heterogeneity. AMO adapts
orthogonalization at the matrix-type level using an observe-then-commit
strategy \citep{amo2026}, while PRISM adapts matrix-function computation using
randomized spectral information during the iterations \citep{prism2026}. \Cref{app:related} reviews related work. The best matrix-function routine depends on the spectrum it is applied to, so
exploiting this during training requires spectral information that is cheap to
obtain.
Newton--Schulz already contains such information. Its intermediate matrices
encode measurements of the input singular-value distribution that can be
extracted through inexpensive scalar reductions, without additional matrix
multiplications. We use them to estimate enough of the spectrum to select a
better-matched routine, a procedure we call \emph{spectral estimation}
(\cref{fig:overview}).

We make three contributions.
\begin{itemize}
    \item We show that the intermediate matrices of Newton--Schulz yield
    spectral measurements at negligible cost. They need no additional matrix
    multiplications, and fitting a spectrum to them costs the same at every
    matrix size, unlike an SVD.

    \item We use these measurements to build a spectrum-adaptive
    Newton--Schulz procedure that selects a polynomial routine for each matrix
    instead of one conservative choice.

    \item We evaluate spectral estimation both offline on saved language-model
    momentum matrices and end-to-end in GPT pretraining. Offline, it
    substantially improves polar approximation at a fixed iteration budget or
    reaches the same accuracy with fewer Newton--Schulz iterations. In pretraining,
    it improves two different matrix optimizers, Muon and OKLS.
\end{itemize}

\section{Matrix functions in optimization}
\label{sec:preliminaries}
\subsection{From matrix updates to spectral transformations}

Matrix optimizers act on gradients, momentum, or preconditioning statistics through matrix-valued transformations. Examples include Muon \citep{jordan2024muon,liu2025muon}, Dion \citep{ahn2025dion}, AdaMuon \citep{si2025adamuon}, and Shampoo-style methods \citep{gupta2018shampoo,anil2020scalable,vyas2025soap}. Muon is our main example. For a momentum matrix $M=U\Sigma V^\top$, its ideal update is the polar factor $\polar(M)=UV^\top$, which preserves singular-vector directions while mapping every nonzero singular value to one \citep{jordan2024muon}.

Other matrix optimizers apply different spectral transformations. Shampoo and SOAP construct dense preconditioners from matrix-valued second-moment statistics, typically using eigendecomposition or related dense linear algebra \citep{gupta2018shampoo,anil2020scalable,vyas2025soap}. OKLS instead computes inverse square roots of those statistics using a coupled Newton--Schulz iteration \citep{okls2026}. Both orthogonalization and dense preconditioning reduce to evaluating matrix functions such as the polar factor, inverse roots, or related spectral powers. Direct SVDs or eigendecompositions make these functions easy to define but expensive to evaluate repeatedly during training. Newton--Schulz and related polynomial iterations replace them with short sequences of matrix multiplications that map efficiently to GPUs \citep{higham2008functions,amsel2026polar}. We focus on polar approximation, while coupled variants provide analogous routes to inverse roots and fractional powers \citep{higham1997stable,qi2026delving}.

\subsection{Newton--Schulz and optimal polynomial iterations}

Newton--Schulz orthogonalization applies a sequence of low-degree polynomials to the singular values of a matrix. The classical update
$x\mapsto(3x-x^3)/2$ drives normalized positive singular values toward one
\citep{bjorck1971iterative,kovarik1970}. Modern variants accelerate this process through improved scaling and higher-degree polynomial iterations
\citep{chen2014scaling,amsel2026polar,cans2025}. Orient $M\in\R^{r\times n}$ so that $r\le n$, and let $X_0$ be a rescaling whose singular values $\gamma_i$ lie in $[0,1]$. A quintic iteration takes the form
\begin{equation}
A_j=X_jX_j^\top,\qquad
X_{j+1}=a_jX_j+(b_jA_j+c_jA_j^2)X_j.
\label{eq:loop}
\end{equation}
If $p_j(x)=a_jx+b_jx^3+c_jx^5$, then after $K$ iterations,
\begin{equation}
X_K
=
U\diag(P(\gamma_i))V^\top,
\qquad
P=p_{K-1}\circ\cdots\circ p_0.
\label{eq:scalar}
\end{equation}
The matrix iteration is therefore determined by its scalar response $P$, and the coefficients set how quickly singular values move toward one without changing an iteration's matrix-multiplication pattern.

Polar Express (PE) chooses these coefficients by minimizing worst-case approximation error over a prescribed spectral interval \citep{amsel2026polar}. For an iteration with design interval $[\ell_j,u_j]$,
\begin{equation}
(a_j^\star,b_j^\star,c_j^\star)
\in
\argmin_{a,b,c\in\R}
\max_{x\in[\ell_j,u_j]}
|1-(ax+bx^3+cx^5)|.
\label{eq:minimax-stage}
\end{equation}
If the resulting error is $e_j$, the next iteration is designed for the output interval $[1-e_j,1+e_j]$. This greedy construction is optimal among compositions of the same depth and degree \citep[Theorem~3.1]{amsel2026polar}, and CANS similarly uses Chebyshev-type design \citep{cans2025}.

This optimality is relative to the chosen spectral interval. For an initial design interval $[\ell_0,u_0]$, rescaling the upper endpoint to one leaves the ratio $\rho=\ell_0/u_0$ as the only relevant quantity. A broader interval therefore defines a harder problem.

\section{Spectrum-dependent cost of Newton--Schulz}
\label{sec:heterogeneity}
\subsection{Broader design intervals require more computation}

The following result shows how widening the design interval degrades the best error at fixed depth.

\begin{theorem}[Interval width and minimax error]
\label{thm:interval-cost}
Let $\eps_K^\star(\rho)$ be the minimum uniform error on $[\rho,1]$ over all compositions of $K$ odd polynomials of degree at most five, and write $e(\rho)=\eps_1^\star(\rho)$ for the one-iteration error. In exact arithmetic, ideal PE attains this minimum, and for $K\ge1$
\begin{equation}
    \eps_{K+1}^\star(\rho)
    =
    e\!\left(
    \frac{1-\eps_K^\star(\rho)}
         {1+\eps_K^\star(\rho)}
    \right).
    \label{eq:interval-recursion}
\end{equation}
For $K\ge1$ and $0<\rho_1<\rho_2<1$,
\begin{equation}
    \eps_K^\star(\rho_1)>\eps_K^\star(\rho_2),\qquad
    \eps_K^\star(\rho)\ge
    \left(\frac{1-\rho}{1+\rho}\right)^{5^K},
    \quad 0<\rho<1.
    \label{eq:interval-lower}
\end{equation}
\end{theorem}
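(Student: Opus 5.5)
We treat the three claims in order, leaning on the optimality result of Polar Express \citep[Theorem~3.1]{amsel2026polar}.

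\emph{The recursion.} By that theorem, the greedy composition attains $\eps_K^\star(\rho)$ in exact arithmetic. After the optimal $K$-step composition with error $\eps=\eps_K^\star(\rho)$, the image of $[\rho,1]$ lies in $[1-\eps,1+\eps]$. The equioscillation of the minimax solution implies both endpoints are attained, so the image is exactly this interval.

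We then use a scale-invariance observation. If $q$ is odd of degree at most five, so is $q(\lambda\,\cdot)$. Hence the best one-step uniform error on $[1-\eps,1+\eps]$ equals the best error on $[(1-\eps)/(1+\eps),1]$, which is $e\big((1-\eps)/(1+\eps)\big)$. Optimality of the greedy scheme then gives \eqref{eq:interval-recursion}.

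\emph{Strict monotonicity.} Weak monotonicity is immediate: $[\rho_2,1]\subset[\rho_1,1]$, so any composition's error on the larger interval dominates. For strictness:
\begin{itemize}
\item[(i)] First show that $e$ is strictly decreasing. Let $q$ be the optimal one-step polynomial for $\rho_1$. If $e(\rho_1)=e(\rho_2)$, then $q$ would also be optimal for $[\rho_2,1]$. Uniqueness of the minimax approximation in the Haar space spanned by $x,x^3,x^5$ on $[\rho_2,1]$ forces $q$ to equal the $\rho_2$-optimizer.
\item[(ii)] That optimizer equioscillates with four alternation points on $[\rho_2,1]$, the leftmost being $\rho_2$ with error $+e$, since $q(\rho_2)<1$. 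On $(\rho_1,\rho_2)$, $q$ is increasing near $0$ and below $1-e$ just left of $\rho_2$ (it is increasing there, or the point would be interior). So $|1-q|>e$ somewhere in $[\rho_1,1]$, a contradiction.
\item[(iii)] Then induct via the recursion. If $\eps_K^\star(\rho_1)>\eps_K^\star(\rho_2)$, then $(1-\eps)/(1+\eps)$ is smaller for $\rho_1$, and strict monotonicity of $e$ gives the claim at $K+1$. The base case $K=1$ is step (i).
\end{itemize}

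\emph{Lower bound.} We expect this to be the main obstacle. The plan is a Chebyshev/Bernstein–Markov-style argument on the composition $P$, which is an odd polynomial of degree at most $d=5^K$.

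The natural tool is the classical bound for approximating the constant $1$ (equivalently $\mathrm{sign}$) on $[\rho,1]$ by odd polynomials. Substituting $t=x^2$ reduces this to approximating $1/\sqrt t$-type functions. A simpler route is to compare with $x\mapsto$ sign on $[-1,-\rho]\cup[\rho,1]$, where the known optimal error satisfies
\[
E_d\ge c\Big(\tfrac{1-\rho}{1+\rho}\Big)^{d}.
\]

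To obtain the clean constant-free form, I would argue directly as follows. Let $r(x)=1-P(x)$, with $|r|\le\eps$ on $[\rho,1]$ and $r(0)=1$. Rescale $[\rho,1]$ to $[-1,1]$ via $y=(2x-1-\rho)/(1-\rho)$. Then $0$ maps to $y_0=-(1+\rho)/(1-\rho)$. The Chebyshev extremal growth bound gives
\[
1=|r(0)|\le\eps\,|T_d(y_0)|\le\eps\,\big(|y_0|+\sqrt{y_0^2-1}\big)^d.
\]
This yields $\eps\ge\big((1-\rho)/(1+\sqrt\rho)^2\big)^{d}\cdot$ (a factor of order one). Since $(1+\sqrt\rho)^2\le 1+\rho+2\sqrt\rho$, some care is needed to reach exactly $((1-\rho)/(1+\rho))^{d}$. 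I would check whether $|y_0|+\sqrt{y_0^2-1}=(1+\sqrt\rho)^2/(1-\rho)$ and $(1-\rho)/(1+\sqrt\rho)^2\ge\big((1-\rho)/(1+\rho)\big)^{\,\cdot}$ suffice. If not, I would exploit oddness: write $P(x)=x\,S(x^2)$ with $\deg S\le(d-1)/2$, and apply the Chebyshev bound in $t=x^2$ on $[\rho^2,1]$ at $t=0$. This halves the effective degree and gives $\big((1-\rho)/(1+\rho)\big)^{d-1}$-type bounds, comfortably stronger than the stated $5^K$ exponent.
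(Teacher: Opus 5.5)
Your recursion and strict-monotonicity arguments match the paper's proof in substance. You identify the $\rho_1$-optimizer with the $\rho_2$-optimizer by uniqueness and show its error exceeds $e$ just left of $\rho_2$. The paper instead notes that restricting the $\rho_1$-optimizer to $[\rho_2,1]$ loses its leftmost alternation point. Both arguments rest on $1-q$ having at most two positive critical points, since $q'(x)=a+3bx^2+5cx^4$ is quadratic in $x^2$. You should state this, because it is what forces $\rho_2$ to be an alternation point with $q(\rho_2)=1-e$. The genuine gap is the lower bound, which your proposal leaves unresolved.

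Your direct Chebyshev argument on $[\rho,1]$ ignores oddness. There $|y_0|+\sqrt{y_0^2-1}=(1+\sqrt\rho)/(1-\sqrt\rho)$, so you get only $\eps\ge\bigl((1-\sqrt\rho)/(1+\sqrt\rho)\bigr)^d$. This is correct but strictly weaker than the claim, because $\sqrt\rho>\rho$: per degree the base is $1-2\sqrt\rho$ versus $1-2\rho$ as $\rho\to0$. So the check you propose fails, and no constant factor rescues it.

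The fallback does not work as written. $1-P(x)=1-\sqrt t\,S(t)$ is not a polynomial in $t=x^2$, so Chebyshev's bound in $t$ cannot be applied to it, and the degree cannot simply be halved. The bound $((1-\rho)/(1+\rho))^{d-1}$ you hope for is in fact false. For $d=1$, $P(x)=2x/(1+\rho)$ has error $(1-\rho)/(1+\rho)<1$. Even for a single quintic, take $c\,x(x^2-\nu)^2$ with $\nu\approx0.674$ and $c\approx18.8$. It maps $[0,1]$ onto $[0,2]$ and has slope about $8.51$ at $0$. Perturbing and recentering it gives $e(\rho)\le1-8.5\rho$ for small $\rho$, which lies below $((1-\rho)/(1+\rho))^4\ge1-8\rho$.

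The missing idea is to square. If $E=\max_{[\rho,1]}|1-P|<1$, then $P\in[1-E,1+E]$ on $[\rho,1]$. Because $P^2$ is even,
\[
R(t)=1-\frac{P(\sqrt t)^2}{1+E^2}
\]
is a genuine polynomial of degree at most $d$ in $t$, with $R(0)=1$ and $|R|\le 2E/(1+E^2)$ on $[\rho^2,1]$. Apply the Chebyshev extremal bound on $[\rho^2,1]$ at $t=0$, using $\operatorname{arccosh}\frac{1+\rho^2}{1-\rho^2}=\log\frac{1+\rho}{1-\rho}=:L$. This gives $2E/(1+E^2)\ge1/\cosh(dL)$. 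The map $v\mapsto2v/(1+v^2)$ is increasing on $[0,1]$ and equals $1/\cosh(dL)$ at $v=\exp(-dL)$, so $E\ge((1-\rho)/(1+\rho))^d$ exactly. The degree in $t$ stays $d$, not $(d-1)/2$, and the normalization by $1+E^2$ is what makes the constant exactly one.
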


By \cref{thm:interval-cost}, a broader interval gives strictly larger minimax error at every fixed depth. As $\rho\to0$, the error approaches one, so achieving a fixed target accuracy eventually requires more iterations. The proof and corresponding depth bound are in \cref{app:interval}. \Cref{fig:interval} shows the effect directly. Broader intervals leave more error at the same depth because the polynomial must amplify small singular values while remaining bounded on the upper end. Narrower ones reach low error sooner.

\begin{figure}[!htbp]
    \centering
    \includegraphics[width=\textwidth]{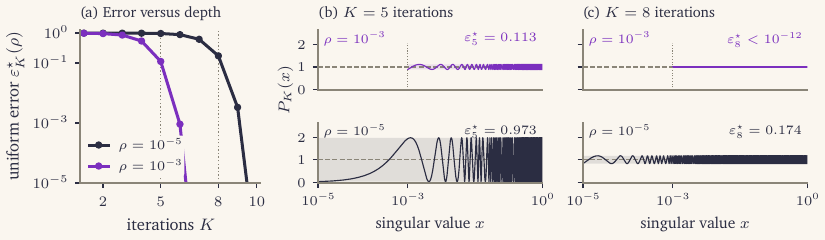}
    \caption{\textbf{Wider design intervals require more iterations.}
    Ideal PE on $[\rho,1]$.
    \textbf{(a)} Optimal uniform error versus depth.
    \textbf{(b,c)} Composed responses at $K=5$ and $K=8$.
    Shading marks $1\pm\eps_K^\star(\rho)$.}
    \label{fig:interval}
\end{figure}

\subsection{Training produces different spectra across layers and time}

Orthogonalization difficulty varies across matrix types, layers, and training time \citep{amo2026}. \Cref{fig:spectra} shows this for attention-query and MLP-up matrices of the $160$M Muon baseline of our pretraining experiments. Attention spectra are broader and MLP spectra more concentrated, and both vary across layers and training. A single conservative interval must cover all of these cases, even where it contains little spectral mass for a given matrix.

\begin{figure}[!htbp]
\centering
\includegraphics[width=\textwidth]{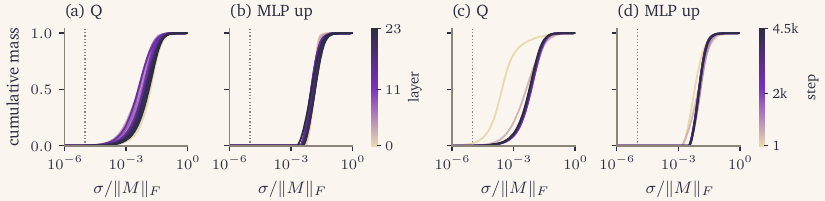}
\caption{\textbf{Spectra vary across layers and training.}
CDFs of the $160$M Muon baseline.
\textbf{(a,b)} Q and MLP-up across layers at step $2$k.
\textbf{(c,d)} Layer $11$ across checkpoints. The dotted line marks $10^{-5}$.}
\label{fig:spectra}
\end{figure}

\subsection{Spectral information identifies unnecessary work}

An oracle that knows the exact nonzero spectral range can design PE for
$[\sigma_{\min},\sigma_{\max}]$.
\Cref{fig:motivation} compares this with a common conservative interval.
Attention still requires a relatively deep iteration, but the narrower MLP spectrum can reach the same target in far fewer iterations.
\begin{figure}[!htbp]
\centering
\includegraphics[width=\textwidth]{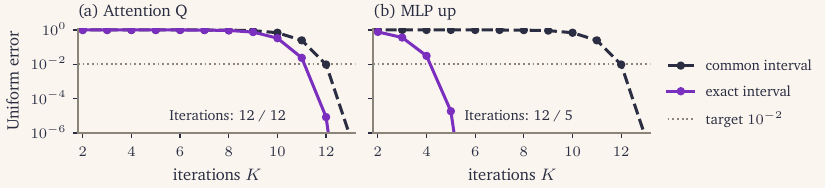}
\caption{\textbf{A conservative interval spends unnecessary work on narrow spectra.}
Q and MLP-up at layer $5$, step $16$k.
The common interval is $[10^{-7},1]$, and the exact interval is each matrix's nonzero spectral range, as used by an oracle.
Labels show the first depth reaching $10^{-2}$.}
\label{fig:motivation}
\end{figure}
Endpoints determine the best uniform guarantee, but average error depends on where the spectral mass lies. For normalized nonzero singular values $\gamma_i$, define
\begin{equation}
E_{\mathrm{RMS}}(P)^2
=
\frac{1}{r_+}
\sum_{i=1}^{r_+}
|1-P(\gamma_i)|^2,
\qquad
r_+=\operatorname{rank}(M).
\label{eq:spectral-rms}
\end{equation}
This is the squared relative Frobenius error to the polar factor. Minimizing worst-case error need not minimize this average, since allowing larger error in a sparse tail can improve accuracy over the bulk \citep{fan2020spectrum}. We therefore use spectral information beyond the endpoints to score candidate routines.
Obtaining this information is usually expensive. An exact SVD of $M\in\R^{r\times n}$ costs $O(r^2n)$ in sequential steps that map poorly to GPUs, and this cost grows with the matrix size. Newton--Schulz provides this information without extra matrix products, at a cost that stays nearly constant as matrices grow (\cref{sec:measure}).

\section{Spectral estimation from Newton--Schulz}
\label{sec:measure}
Choosing the best routine requires information about the input spectrum, and
computing it with an SVD would defeat the purpose. If an SVD were affordable at
every step, we could compute the polar factor directly. Instead, Newton--Schulz
itself lets us estimate the spectrum cheaply. We collect the measurements during
a measurement pass and reuse the chosen routine over subsequent steps. For nonzero $M$, let $x_i=\sigma_i(M)/\|M\|_F$ and
$X_0=M/(s\|M\|_F)$, where $s$ upper-bounds $\max_i x_i$.
Then $\gamma_i=x_i/s\in[0,1]$ and $\mu=\frac1r\sum_{i=1}^r\delta_{\gamma_i}$ is the empirical scaled spectrum, including zeros.

\subsection{Moments from the Newton--Schulz computation}

\begin{wrapfigure}{l}{0.48\textwidth}
\vspace{-10pt}
\setlength{\fboxsep}{6pt}
\colorbox{paper}{\begin{minipage}{\dimexpr\linewidth-12pt\relax}
\footnotesize
\textbf{Measurement pass}\par

\begin{algorithmic}[1]
\Require $X_0$, coefficients $(a_j,b_j,c_j)$
\For{$j=0,\ldots,K_{\mathrm{meas}}-1$}
\State $A_j\gets X_jX_j^\top$ \hfill\textcolor{referenceblue}{(1)}
\State $B_j\gets A_jA_j$ \hfill\textcolor{referenceblue}{(2)}
\State \textcolor{method}{\mbox{$(m_{2,j},m_{4,j})\gets(\tr A_j,\|A_j\|_F^2)/r$}}
\State \textcolor{method}{\mbox{$(m_{6,j},m_{8,j})\gets(\langle A_j,B_j\rangle_F,\|B_j\|_F^2)/r$}}
\State $C_j\gets b_jA_j+c_jB_j$
\State $X_{j+1}\gets a_jX_j+C_jX_j$ \hfill\textcolor{referenceblue}{(3)}
\EndFor
\State \textcolor{method}{Read $\|X_{K_{\mathrm{meas}}}\|_F^2/r$}
\end{algorithmic}

\textcolor{referenceblue}{(1)--(3) are existing matrix products.}\par
\textcolor{method}{Purple marks added scalar reductions.}\zsavepos{nsboxend}
\end{minipage}}
\end{wrapfigure}

Newton--Schulz already forms the matrices needed to read spectral moments. At iteration $j$, the intermediate matrices
$A_j=X_jX_j^\top$ and $B_j=A_j^2$
expose the even moments of orders two through eight using only scalar reductions. No additional matrix multiplication is required. Because each iteration applies a different polynomial transform, successive readouts measure different views of the same input spectrum \citep{weisse2006kpm,lin2016density}. To see this, write
$X_0=U\diag(\gamma_i)V^\top$
and define $z_0(x)=x$,
$z_{j+1}(x)=p_j(z_j(x))$, and
$Z_j=\diag(z_j(\gamma_i))$.
Then \cref{eq:loop} gives\zsavepos{nstextend}
\clearrefbox\WFclear
\begin{equation}
X_j=UZ_jV^\top,\quad
A_j=UZ_j^2U^\top,\quad
B_j=UZ_j^4U^\top.
\label{eq:gram-spectrum}
\end{equation}

Since $A_j$ is symmetric,
$\langle A_j,B_j\rangle_F=\tr(A_j^3)$ and
$\|B_j\|_F^2=\tr(A_j^4)$, giving
\begin{equation}
m_{2,j}=\tr(A_j)/r, \  
m_{4,j}=\|A_j\|_F^2/r, \ 
m_{6,j}=\langle A_j,B_j\rangle_F/r, \ 
m_{8,j}=\|B_j\|_F^2/r.
\label{eq:reductions}
\end{equation}
The cost of these reductions grows only with the smaller matrix dimension and is a vanishing fraction of each iteration (\cref{app:pretraining-cost}).

\begin{proposition}[Spectral moments from the iteration]
\label{prop:moments}
In exact arithmetic,
\begin{equation}
m_{2k,j}
=
\frac{\tr A_j^k}{r}
=
\frac1r\sum_{i=1}^r z_j(\gamma_i)^{2k}
=
\int z_j(x)^{2k}\dd\mu(x),
\qquad k=1,\ldots,4.
\label{eq:moments}
\end{equation}
For $j=0,\ldots,K_{\mathrm{meas}}-1$, this gives four measurements per iteration. The final output contributes
\[
m_{2,K_{\mathrm{meas}}}
=
\|X_{K_{\mathrm{meas}}}\|_F^2/r
=
\int z_{K_{\mathrm{meas}}}(x)^2\dd\mu(x),
\]
for $4K_{\mathrm{meas}}+1$ scalar measurements in total.
\end{proposition}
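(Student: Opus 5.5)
The plan is to establish the spectral form \cref{eq:gram-spectrum} by induction on $j$, and then read off each moment as a trace of a power of a diagonal matrix.

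For the induction, write the SVD of the scaled input as $X_0=U\diag(\gamma_i)V^\top$. Since $M$ is oriented so that $r\le n$, $U\in\R^{r\times r}$ is orthogonal and $V\in\R^{n\times r}$ has orthonormal columns. The base case $X_0=UZ_0V^\top$ holds by definition. Suppose $X_j=UZ_jV^\top$. Using $V^\top V=I_r$, we get $A_j=X_jX_j^\top=UZ_j^2U^\top$, and by $U^\top U=I$, $A_j^2=UZ_j^4U^\top$. Substituting into \cref{eq:loop},
\[
X_{j+1}=U\bigl(a_jZ_j+b_jZ_j^3+c_jZ_j^5\bigr)V^\top=U\,p_j(Z_j)V^\top=UZ_{j+1}V^\top .
\]
Here $p_j$ acts entrywise on the diagonal, which is exactly the definition $z_{j+1}=p_j\circ z_j$. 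This argument never uses positivity of the $z_j(\gamma_i)$. That matters because an intermediate polynomial may produce negative values, and the even powers below make the sign irrelevant.

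Next come the traces. By induction, $A_j^k=UZ_j^{2k}U^\top$ for every $k\ge1$. Cyclicity of the trace together with $U^\top U=I$ then gives
\[
\tr A_j^k=\tr Z_j^{2k}=\sum_{i=1}^r z_j(\gamma_i)^{2k}.
\]
Because $\mu=\frac1r\sum_i\delta_{\gamma_i}$ counts all $r$ singular values, zeros included (and $z_j(0)=0$ since every $p_j$ is odd), the normalized sum equals $\int z_j^{2k}\dd\mu$.

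It remains to check that the scalar reductions in \cref{eq:reductions} compute these traces. Since $A_j$ is symmetric:
\begin{itemize}
\item $\tr A_j$ is the order-two moment by definition;
\item $\|A_j\|_F^2=\tr(A_j^\top A_j)=\tr A_j^2$;
\item $\langle A_j,B_j\rangle_F=\tr(A_j^\top A_j^2)=\tr A_j^3$;
\item $\|B_j\|_F^2=\tr(B_j^\top B_j)=\tr A_j^4$.
\end{itemize}
This covers $k=1,\ldots,4$.

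For the final output, $\|X_{K_{\mathrm{meas}}}\|_F^2=\tr(X_{K_{\mathrm{meas}}}X_{K_{\mathrm{meas}}}^\top)=\tr Z_{K_{\mathrm{meas}}}^2$, again by the same induction. The count is four measurements for each of the $K_{\mathrm{meas}}$ iterations plus this one, giving $4K_{\mathrm{meas}}+1$.

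The only step needing care is bookkeeping rather than a genuine obstacle. We must make sure $U$ is a full orthogonal $r\times r$ factor, so that the normalization by $r$ matches $\mu$ including zero singular values. This is why the thin SVD with $r\le n$ is used.
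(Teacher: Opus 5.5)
Your proof is correct and follows the paper's argument. You propagate the SVD through the iteration to get $A_j=UZ_j^2U^\top$, read off $\tr A_j^k=\sum_i z_j(\gamma_i)^{2k}$, use the symmetry of $A_j$ to identify the Frobenius reductions with $\tr A_j^2,\tr A_j^3,\tr A_j^4$, and obtain $\|X_{K_{\mathrm{meas}}}\|_F^2=\sum_i z_{K_{\mathrm{meas}}}(\gamma_i)^2$ for the final output. Your explicit induction with a full $r\times r$ orthogonal $U$, which makes both the zero singular values and possibly negative intermediate values $z_j(\gamma_i)$ harmless, just spells out the step the paper compresses into ``applying this identity iteration by iteration.''
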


Each pair $(j,k)$ therefore defines a known measurement function
$\phi_q(x)=z_j(x)^{2k}$
with observation
$y_q=\int\phi_q\dd\mu$.
These functions constrain several transformed views of the same spectrum from one pass.
We choose $s$ using the CANS Gram-power upper bound
\citep[Section~3.3]{cans2025}, reusing the first-iteration products, which safely bounds the spectrum from above.

\subsection{Fitting a spectrum to the measured moments}

We recover a discrete spectral measure from the measurement functions using a
maximum-entropy moment fit \citep{mead1984,silver1997entropy}. Place masses
$w_b$ on grid points $g_b\in[0,1]$, and define $\Phi_{qb}=\phi_q(g_b)$. Then $\Phi w$ gives the predicted measurements. We enforce normalization and the known second moment by restricting $w$ to
\begin{equation}
\Delta=
\left\{
w\ge0:
\boldsymbol 1^\top w=1,\qquad
\sum_b w_bg_b^2=\frac{1}{rs^2}
\right\}.
\label{eq:simplex}
\end{equation}

Because discretization and measurement error may preclude an exact match, we
first minimize the weighted residual, then select the maximum-entropy fit
within tolerance $\kappa$. With measured values $\yh_q$, row scales $d_q$ that put them on a common scale, and $D=\diag(1/d_q)$,
\begin{equation}
\xi^\star=\min_{w\in\Delta}\|D(\Phi w-\yh)\|_2,
\qquad
\wh=\argmax_{w\in\Delta}\Bigl\{-\sum_b w_b\log w_b:\|D(\Phi w-\yh)\|_2\le\xi^\star+\kappa\Bigr\}.
\label{eq:fit}
\end{equation}

Both problems are convex. When several grid
distributions match the measurements, the entropy objective selects a unique representative.
\Cref{fig:observer} compares the fitted measures with the SVD spectra. The fit
captures broad attention and concentrated MLP spectra, with close
agreement in cumulative mass.

\begin{figure}[!htbp]
\centering
\includegraphics[width=\textwidth]{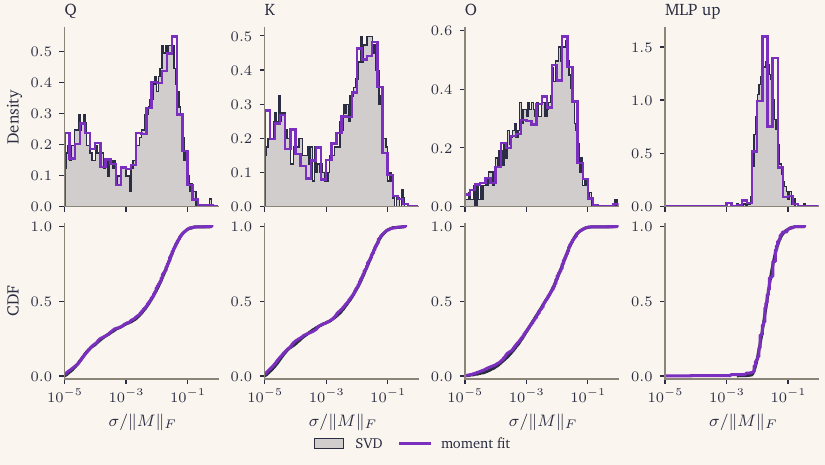}
\caption{\textbf{Moment fits recover useful spectral structure.}
Four matrices at layer $5$, step $16$k.
The top row shows the density in $\log_{10}(\sigma/\|M\|_F)$ and the bottom row
the CDF, both conditioned on $\sigma/\|M\|_F\ge10^{-5}$.}
\label{fig:observer}
\end{figure}
\Needspace{5\baselineskip}
\subsection{What the fitted spectrum can predict}

We do not need to recover the spectrum uniquely, only to predict the
spectral quantities used to score candidate routines. Although finitely many
moments may not identify a distribution \citep{krein1977}, they can determine
a particular spectral average when its integrand is well represented by the
measurements \citep{braverman2022sublinear}.

\begin{samepage}
\begin{theorem}[Prediction from the fitted spectrum]
\label{thm:transfer}

Let $\mu$ be the true spectral measure on $[0,1]$, and let
$\phi_1,\ldots,\phi_L$ be the measurement functions with observations
$\yh_q$ satisfying
$|\yh_q-\int \phi_q\,\dd\mu|\le\omega_q$.
Let $\wh$ solve \cref{eq:fit} and define
$\widehat\mu=\sum_b\wh_b\delta_{g_b}$.
For any bounded measurable $f$ and any affine combination of the measurement functions $\psi_\alpha=\alpha_0+\sum_{q=1}^L\alpha_q\phi_q$, define
$\beta_\alpha(f)=\sup_{x\in[0,1]}|f(x)-\psi_\alpha(x)|$.
Then, with $\alpha=(\alpha_1,\ldots,\alpha_L)$ and $D$, $\xi^\star$ as in \cref{eq:fit},
\begin{equation}
\left|
\int f\dd\widehat\mu-\int f\dd\mu
\right|
\le
2\beta_\alpha(f)
+\sum_{q=1}^L |\alpha_q|\omega_q
+(\xi^\star+\kappa)\|D^{-1}\alpha\|_2.
\label{eq:fit-transfer}
\end{equation}

\end{theorem}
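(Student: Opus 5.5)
The plan is to split the error into three pieces by inserting the surrogate $\psi_\alpha$ between $f$ and the two measures. Both $\mu$ and $\widehat\mu$ are probability measures on $[0,1]$: $\mu$ by definition of the empirical scaled spectrum, and $\widehat\mu$ because $\wh\in\Delta$ forces $\wh\ge0$ and $\boldsymbol 1^\top\wh=1$. We then write
\[
\int f\dd\widehat\mu-\int f\dd\mu
=\int (f-\psi_\alpha)\dd\widehat\mu-\int (f-\psi_\alpha)\dd\mu+\Bigl(\int\psi_\alpha\dd\widehat\mu-\int\psi_\alpha\dd\mu\Bigr).
\]
Since $|f-\psi_\alpha|\le\beta_\alpha(f)$ pointwise on $[0,1]$ and both measures have total mass one, the first two terms together contribute at most $2\beta_\alpha(f)$. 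The measurement functions $\phi_q=z_j^{2k}$ are continuous and bounded on $[0,1]$, so all integrals are finite.

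For the third term, the constant $\alpha_0$ cancels because both measures have unit mass. This leaves $\sum_q\alpha_q\bigl(\int\phi_q\dd\widehat\mu-\int\phi_q\dd\mu\bigr)$. By construction $\int\phi_q\dd\widehat\mu=\sum_b\Phi_{qb}\wh_b=(\Phi\wh)_q$. We insert the observations and split into two sums:
\[
\sum_q\alpha_q\bigl((\Phi\wh)_q-\yh_q\bigr)+\sum_q\alpha_q\Bigl(\yh_q-\int\phi_q\dd\mu\Bigr).
\]
The second sum is bounded by $\sum_q|\alpha_q|\omega_q$ using the assumed measurement accuracy.

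For the first sum, we write it as $\langle\alpha,\Phi\wh-\yh\rangle=\langle D^{-1}\alpha,\,D(\Phi\wh-\yh)\rangle$, which uses that $D=\diag(1/d_q)$ is invertible and diagonal. Cauchy--Schwarz then bounds it by $\|D^{-1}\alpha\|_2\,\|D(\Phi\wh-\yh)\|_2$. Because $\wh$ is feasible for the entropy problem in \cref{eq:fit}, the second factor is at most $\xi^\star+\kappa$. Adding the three contributions gives \cref{eq:fit-transfer}.

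The only point needing care is that $\wh$ exists and is feasible, so that the residual constraint really holds. The residual problem minimizes a continuous function over the compact polytope $\Delta$, which we assume nonempty (it is whenever the grid admits the second-moment constraint). So $\xi^\star$ is attained, and the feasible set of the entropy problem contains the minimizer and is nonempty and compact. The entropy is continuous on it, so a maximizer exists, and it is unique by strict concavity. Any maximizer works for the bound anyway, since the argument only uses feasibility. The main conceptual subtlety is the cancellation of $\alpha_0$, which needs both measures to be normalized: the constraint $\boldsymbol 1^\top w=1$ gives this for $\widehat\mu$, and the fact that $\mu$ includes zero singular values with weight $1/r$ gives it for $\mu$. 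The second-moment constraint in $\Delta$ is not needed for the bound.
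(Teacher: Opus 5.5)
Your proof is correct and follows the paper's argument essentially step for step: insert $\psi_\alpha$, cancel $\alpha_0$ using unit mass, bound the two surrogate terms by $2\beta_\alpha(f)$, split the remaining linear term into measurement error and fit residual, and apply Cauchy--Schwarz with $D^{-1}\alpha$ together with the entropy-problem constraint; existence and uniqueness of $\wh$ come from compactness and strict concavity, as in the paper. Your side remarks are accurate additions that the paper leaves implicit: that $\Delta$ must be nonempty, and that only feasibility of $\wh$ (not the second-moment constraint) enters the bound.
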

\end{samepage}

The three terms capture how well $f$ is represented by the measured
functions, error in the observed values, and the fit residual. In
exact arithmetic the reductions are exact, so $\omega_q=0$ and the middle term
covers only numerical error. If $f$ lies in their affine span and the
fitted moments are exact, its spectral average is recovered exactly even when
the spectrum is not uniquely determined.

\subsection{Selecting a Polar Express routine}
\label{sec:method}

The fitted spectrum thus predicts the averages that determine candidate
quality, and we use it to score a precomputed set of PE routines. Each entry
$\pi=(\rho_\pi,K_\pi)\in\mathcal C$ is a complete PE sequence with response $P_\pi$,
designed for $[\rho_\pi,1]$. Here $\rho_\pi$ only sets the routine's design interval.
All candidates are scored on the same spectral interval
$\region=[x_\star/s,1]$, where $x_\star$ is fixed in Frobenius-normalized
coordinates. Their true and predicted squared RMS errors are
\begin{equation}
E_\region(P_\pi)^2
=
\frac{\int_\region (1-P_\pi(x))^2\dd\mu(x)}{\mu(\region)},
\qquad
\Eh_\pi^2
=
\frac{\sum_{g_b\in\region}\wh_b(1-P_\pi(g_b))^2}
     {\sum_{g_b\in\region}\wh_b}.
\label{eq:score}
\end{equation}
Since candidate responses are precomputed, scoring needs only scalar
operations on the fitted spectrum, with no candidate matrix runs. The same scores support two objectives. The \emph{fixed-depth
rule} picks the best routine at a given depth $K$,
$\widehat\pi_K\in\argmin_{\pi\in\mathcal C,\,K_\pi=K}\Eh_\pi$. The
\emph{target rule} picks the shortest routine predicted to meet an accuracy
$\tau$, $\widehat\pi_\tau\in\argmin_{\pi\in\mathcal C,\,\Eh_\pi\le\tau}K_\pi$,
and falls back to fixed PE at the largest allowed depth if no candidate meets
$\tau$. \Cref{fig:choice} illustrates both choices. 

These three steps, measure, fit, and select, make up spectral estimation.
Beyond early training, spectra change gradually, so we measure every $T$ steps and reuse the selected routine in between. Offline, we instead fit each saved checkpoint independently.

\begin{figure}[!htbp]
\centering
\includegraphics[width=\textwidth]{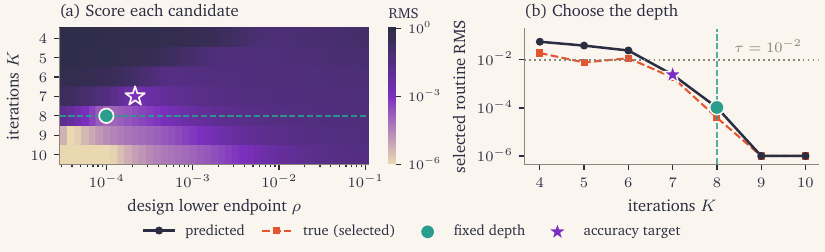}
\caption{\textbf{One set of predicted scores supports both objectives.}
MLP-up, layer $5$, step $16$k.
\textbf{(a)} Predicted errors, with the best depth-$8$ endpoint (circle) and the
shortest routine meeting $\tau=10^{-2}$ (star).
\textbf{(b)} Predicted and SVD-evaluated errors of the selected routines.
Values below $10^{-6}$ are shown at that floor.}
\label{fig:choice}
\end{figure}

\section{Experiments}
\label{sec:experiments}

\subsection{Orthogonalization of real spectra}

We evaluate saved Muon momentum spectra from a FineWeb run
\citep{penedo2024fineweb}. Baselines are fixed conservative PE on
$[10^{-5},1]$ \citep{amsel2026polar} and a minimax oracle. Using the exact
spectrum, the oracle selects from the same candidate set the routine with the
smallest worst-case error
$E_{\max}(P)=\max_{x\in[\gamma_{\min},\gamma_{\max}]}|1-P(x)|$, where
$[\gamma_{\min},\gamma_{\max}]$ spans the scored singular values. Spectral estimation instead minimizes
predicted RMS error, which the fitted spectrum predicts more reliably than
worst-case error, so we report both errors. Both share
normalization, candidate set, and scoring interval. Offline
moments are computed exactly from the saved spectra, and bf16 measurements give
nearly the same selections (\cref{app:pretraining-validation}).

At fixed depth, selection reduces RMS error most strongly on MLP
matrices (\cref{fig:experiments,tab:oracle-depth}). Because it targets RMS
error, spectral estimation has lower RMS error than the minimax oracle on attention
through $K=9$ and on MLP through $K=6$, while the oracle has lower worst-case
error at every depth.

With a target accuracy, spectral estimation picks shorter routines than fixed PE, especially for MLP matrices (\cref{fig:experiments}c--d, \cref{tab:allocation}).

\begin{figure}[H]
\centering
\includegraphics[width=\textwidth]{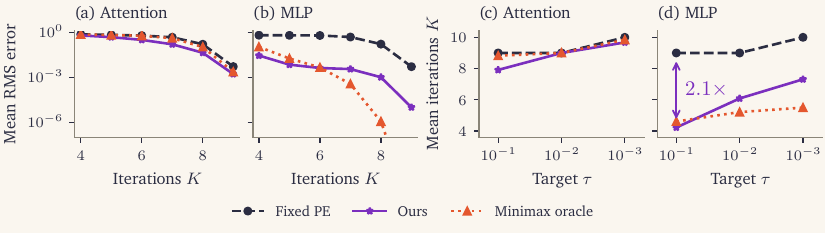}
\caption{\textbf{Offline accuracy and depth.} Means over attention and MLP
matrices, scored above $10^{-5}$ in Frobenius-normalized coordinates. \textbf{(a--b)} Polar RMS error at fixed
depth (\cref{tab:oracle-depth}).
\textbf{(c--d)} Iterations needed to meet a target $\tau$
(\cref{tab:allocation}). Ours meets $\tau$ in predicted RMS error, the
oracle in worst-case error, and fixed PE is the uniform depth that meets
$\tau$.}
\label{fig:experiments}
\end{figure}

\begin{table}[!htbp]
\centering\small
\caption{\textbf{Fixed-depth polar error.} Mean per-matrix RMS and worst-case error (minimax oracle).}
\label{tab:oracle-depth}
\setlength{\tabcolsep}{4pt}
\begin{tabular}{lcrrr@{\hspace{10pt}}rrr}
\toprule
& & \multicolumn{3}{c}{RMS error} & \multicolumn{3}{c}{Worst-case error}\\
\cmidrule(lr){3-5}\cmidrule(lr){6-8}
& $K$ & Fixed PE & Ours & Oracle & Fixed PE & Ours & Oracle\\
\midrule
Attention & $5$ & $0.667$ & $\best{0.472}$ & $0.653$ & $0.976$ & $0.982$ & $\best{0.966}$\\
 & $8$ & $0.159$ & $\best{0.0431}$ & $0.104$ & $0.222$ & $0.353$ & $\best{0.161}$\\
MLP & $5$ & $0.626$ & $\best{0.00694}$ & $0.0179$ & $0.976$ & $0.0643$ & $\best{0.0266}$\\
 & $8$ & $0.162$ & $0.00101$ & $\best{1.13\!\times\!10^{-6}}$ & $0.222$ & $0.00153$ & $\best{1.78\!\times\!10^{-6}}$\\
\bottomrule
\end{tabular}

\vspace{8pt}
\caption{\textbf{Mean depth $K$ at target $\tau$} (ours in predicted RMS error, oracle in worst-case error).}
\label{tab:allocation}
\setlength{\tabcolsep}{5pt}
\begin{tabular}{lrrr@{\hspace{12pt}}rrr}
\toprule
& \multicolumn{3}{c}{Attention} & \multicolumn{3}{c}{MLP}\\
\cmidrule(lr){2-4}\cmidrule(lr){5-7}
$\tau$ & $10^{-1}$ & $10^{-2}$ & $10^{-3}$ & $10^{-1}$ & $10^{-2}$ & $10^{-3}$\\
\midrule
Ours & $\best{7.91}$ & $\best{9.00}$ & $\best{9.68}$ & $\best{4.23}$ & $6.09$ & $7.31$\\
Oracle & $8.82$ & $\best{9.00}$ & $9.84$ & $4.61$ & $\best{5.21}$ & $\best{5.50}$\\
\bottomrule
\end{tabular}
\end{table}

\subsection{LLM pretraining}
\label{sec:training}

We pretrain GPT-style decoder-only language models \citep{radford2019language}
with $160$M, $300$M, and $1$B non-embedding parameters on $20$B tokens of
Nemotron-CC v2 High-Quality \citep{nvidia2025nemotronccv2,nvidia2025nemotronnano2}, using the
experimental setup of \citet{okls2026} with minor changes
(\cref{app:pretraining-setup}).
Muon and OKLS need different matrix functions. Muon needs the polar factor of
each momentum matrix, and OKLS the inverse square roots of Kronecker-factored
second-moment statistics, computed by a coupled Newton--Schulz iteration.
Our method handles both with the same measure, fit, and select procedure,
changing only the measured products and candidate routines
(\cref{app:inverse}), so it is not specific to
orthogonalization. Each run fixes the depth $K$ of ordinary
(non-measurement) steps and varies only the routine. With the fixed-depth
RMS rule of \cref{sec:method}, our method selects among $K$-iteration routines
that differ only in their endpoint $\rho$, namely $\PE[\rho,1]$ with
$\rho\in[10^{-5},10^{-1}]$ for Muon and cubic inverse-root routines with
$\rho\in[10^{-7},1]$ for OKLS. The oracle selects from the same set by
the minimax rule every $500$ steps. The fixed baseline is $\PE[10^{-5},1]$ for Muon and, for OKLS, the cubic routine for eigenvalues above $10^{-7}\lambda_{\max}$.

\begin{table}[!htbp]
\centering\small
\begin{minipage}[b]{0.66\textwidth}
\caption{\textbf{Pretraining across optimizers and sizes.}
Final validation loss, $K=5$, $20$B tokens, with $T=200$ for ours.}
\label{tab:pretraining-main}
\end{minipage}\hfill
\begin{minipage}[b]{0.30\textwidth}
\caption{\textbf{Muon 160M across depths.} Final validation loss.}
\label{tab:depth}
\end{minipage}\par
\begin{minipage}[t]{0.66\textwidth}
\centering
\setlength{\tabcolsep}{3pt}
\begin{tabular}[t]{lrrrcrrr}
\toprule
& \multicolumn{3}{c}{Muon} && \multicolumn{3}{c}{OKLS}\\
\cmidrule(lr){2-4}\cmidrule(lr){6-8}
Size & Fixed & Oracle & Ours && Fixed & Oracle & Ours\\
\midrule
160M & $2.761$ & $2.752$ & $\best{2.751}$ && $2.687$ & $2.675$ & $\best{2.673}$\\
300M & $2.662$ & $\best{2.654}$ & $2.655$ && $2.597$ & $2.591$ & $\best{2.590}$\\
1B & $2.455$ & $2.450$ & $\best{2.449}$ && $2.449$ & $2.439$ & $\best{2.436}$\\
\bottomrule
\end{tabular}
\end{minipage}\hfill
\begin{minipage}[t]{0.30\textwidth}
\centering
\setlength{\tabcolsep}{4pt}
\begin{tabular}[t]{crr}
\toprule
$K$ & Fixed & Ours\\
\midrule
$4$ & $2.808$ & $\best{2.791}$\\
$5$ & $2.761$ & $\best{2.751}$\\
$6$ & $2.754$ & $\best{2.752}$\\
$8$ & $\best{2.752}$ & $\best{2.752}$\\
\bottomrule
\end{tabular}
\end{minipage}
\end{table}

Spectral estimation improves on the fixed baseline for both optimizers at
every model size, and it matches or improves on periodic selection from an
exact SVD or eigendecomposition (\cref{tab:pretraining-main,fig:pretraining}).

We also vary the depth $K$ of ordinary Muon steps at $160$M
(\cref{tab:depth}). The fixed baseline saturates quickly. With $6$ and $8$ iterations it
reaches the same validation loss, so more accurate orthogonalization brings no
further gain. With only five iterations, our method reaches a lower loss than the fixed
baseline at any depth, so adaptation can replace extra iterations.

\begin{figure}[t]
\centering
\includegraphics[width=\textwidth]{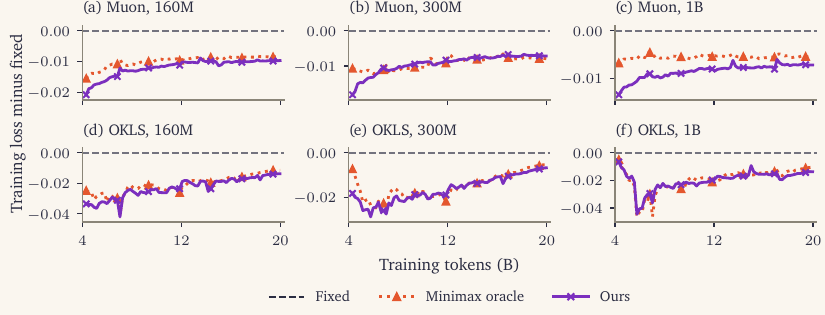}
\caption{\textbf{Training loss relative to the fixed baseline.}
Training loss minus the fixed baseline at $K=5$ in $50$-step windows, where negative is better.}
\label{fig:pretraining}
\end{figure}

\section{Conclusion}

The Newton--Schulz method does not need to treat the spectrum it acts on as unknown, because its iterations already produce measurements that can be reused to specialize its
own computations. We use these measurements to select spectrum-adapted polynomial routines, improving approximation at fixed depth or reducing the iterations needed to reach a target accuracy. We evaluate the method on saved momentum spectra and in end-to-end GPT pretraining. Useful adaptation does not require recovering the spectrum exactly, only
predicting which candidate routine will work well. Because the measurements come from matrices the iteration already forms and the fit works on a fixed number of scalars, this adaptation stays cheap at the matrix sizes of current frontier models, where computing exact spectra becomes expensive.

\FloatBarrier
{\small\bibliography{references}\bibliographystyle{iclr2027_conference}}
\clearpage
\phantomsection\label{paper:appendix-start}
\appendix
\input{appendix}
\end{document}

%% file: appendix.tex
\renewcommand{\topfraction}{0.9}
\renewcommand{\bottomfraction}{0.8}
\renewcommand{\textfraction}{0.07}
\renewcommand{\floatpagefraction}{0.85}
\setcounter{topnumber}{3}
\setcounter{bottomnumber}{2}
\setcounter{totalnumber}{5}

\section{Proofs}

\subsection{\texorpdfstring{Proof of \cref{prop:moments}}{Proof of Proposition 1}}
\label{app:metric}

For a thin SVD $X=U\diag(x_i)V^\top$ with orthonormal columns in $U$ and $V$,
$(XX^\top)^kX=U\diag(x_i^{2k+1})V^\top$. Applying this identity iteration by iteration
gives \cref{eq:scalar}.

\begin{proof}
The eigenvalues of $A_j$ are $z_j(\gamma_i)^2$, so
$\tr A_j^k=\sum_i z_j(\gamma_i)^{2k}$. For symmetric $A_j$,
\[
 \tr A_j^2=\|A_j\|_F^2,\qquad
 \tr A_j^3=\langle A_j,A_j^2\rangle_F,\qquad
 \tr A_j^4=\|A_j^2\|_F^2.
\]
The final iterate satisfies
$\|X_{K_{\mathrm{meas}}}\|_F^2=\sum_i z_{K_{\mathrm{meas}}}(\gamma_i)^2$.
This proves \cref{eq:moments}.
\end{proof}

\paragraph{Singular values below the scoring cutoff.}
Let $x_\star$ be the scoring cutoff, $\region=[x_\star/s,1]$ the scoring
interval, and $m_\region=\mu(\region)$ the fraction of singular values in it.
For a full-row-rank input, the output is $X_K=U\diag(P(\gamma_i))V^\top$ and
the polar factor is $UV^\top$, so
\begin{equation}
 \frac{\|X_K-UV^\top\|_F^2}{r}
 =m_\region E_\region(P)^2
 +\frac1r\sum_{\gamma_i\notin\region}(1-P(\gamma_i))^2.
 \label{eq:full-error}
\end{equation}
Our score covers only the first term. In the offline data, on average
$3.8\%$ of attention singular values lie below the cutoff (median $2.0\%$, at
most $24\%$), and no MLP singular values do. For rank-deficient inputs the same
decomposition holds with the partial polar factor, whose response is zero on
zero singular values.

\subsection{\texorpdfstring{Proof of \cref{thm:interval-cost}}{Proof of Theorem 1}}
\label{app:interval}

\begin{proof}
We first derive the recursion using the composition-optimality result of
\citet[Theorem~3.1]{amsel2026polar}. The change of variables $x=ut$ maps a
quintic on $[\ell,u]$ to one on $[\ell/u,1]$, so the one-iteration optimum depends
only on $\rho=\ell/u$. If an optimal iteration has error $\delta$, its image is
$[1-\delta,1+\delta]$ by continuity and equioscillation. The next optimal error
is therefore $e((1-\delta)/(1+\delta))$, which gives
\cref{eq:interval-recursion}.

For strict monotonicity, $\operatorname{span}\{x,x^3,x^5\}$ is a Haar space on
every positive interval, so the minimax residual has four alternating extrema.
Restricting a minimizer from $[\rho_1,1]$ to $[\rho_2,1]$ with $\rho_1<\rho_2$
removes the leftmost extremum, so it cannot remain minimax with the same error.
Hence the one-iteration error is strictly decreasing in $\rho$. Since
$t\mapsto(1-t)/(1+t)$ is strictly decreasing, induction on $K$ gives strict
monotonicity at every depth.

For the lower bound, let $P$ be an odd polynomial of degree at most $d$ and
$E=\max_{x\in[\rho,1]}|1-P(x)|$. The case $E\ge1$ is immediate. Otherwise
$P(x)\in[1-E,1+E]$, and
$R(y)=1-P(\sqrt y)^2/(1+E^2)$ is a polynomial of degree at most $d$ with
$R(0)=1$ and $\max_{y\in[\rho^2,1]}|R(y)|\le 2E/(1+E^2)$. Mapping $[\rho^2,1]$
to $[-1,1]$ and applying the Chebyshev extremal bound at the image of $y=0$
gives
\[
 \frac{2E}{1+E^2}\ge \frac{1}{\cosh(dL)},
 \qquad
 L=\log\frac{1+\rho}{1-\rho}.
\]
Because $e\mapsto2e/(1+e^2)$ is increasing on $[0,1]$ and equals
$1/\cosh(dL)$ at $e=\exp(-dL)$, we get
$E\ge\exp(-dL)=\left((1-\rho)/(1+\rho)\right)^d$. A composition of $K$
quintics has degree at most $5^K$, which proves \cref{eq:interval-lower}.
\end{proof}

Consequently, reaching a uniform error $\tau$ on $[\rho,1]$ requires
\begin{equation}
 5^K\ge\frac{\log(1/\tau)}{\log((1+\rho)/(1-\rho))}.
 \label{eq:interval-necessary-depth}
\end{equation}

\subsection{\texorpdfstring{Proof of \cref{thm:transfer} and accuracy of selection}{Proof of Theorem 2 and accuracy of selection}}
\label{app:transfer}

\begin{proof}
The set $\Delta$ is a compact convex subset of the simplex, so the best-fit
minimum in \cref{eq:fit} is attained, and its minimizer is feasible
for the maximum-entropy problem. The feasible set of the latter is compact and convex,
and entropy is continuous and strictly concave, so the fitted measure exists
and is unique.

Insert $\psi_\alpha=\alpha_0+\sum_q\alpha_q\phi_q$ between the two integrals.
The constant $\alpha_0$ cancels because both measures have unit mass, so
\begin{align*}
 \int f\dd\widehat\mu-\int f\dd\mu
 &=\int(f-\psi_\alpha)\dd\widehat\mu-\int(f-\psi_\alpha)\dd\mu\\
 &\quad+\alpha^\top(\Phi\wh-\yh)
 +\alpha^\top\!\left(\yh-\int\phi\dd\mu\right).
\end{align*}
The first two terms are at most $2\beta_\alpha(f)$ in absolute value, and the
last is at most $\sum_q|\alpha_q|\omega_q$. Finally,
$|\alpha^\top(\Phi\wh-\yh)|
 =|(D^{-1}\alpha)^\top D(\Phi\wh-\yh)|
 \le\|D^{-1}\alpha\|_2(\xi^\star+\kappa)$.
Adding the terms gives \cref{eq:fit-transfer}.
\end{proof}

\paragraph{From integrals to selection.}
The score in \cref{eq:score} is the ratio of the error integral
$J_\pi=\int_\region(1-P_\pi)^2\dd\mu$ to the scored mass $m_\region$.
\Cref{thm:transfer} applies to each. If the estimates satisfy
$|\widehat J_\pi-J_\pi|\le\delta_{J_\pi}$, $|\widehat m_\region-m_\region|\le\delta_m$,
and $\widehat m_\region\ge m_0>0$, and $(1-P_\pi)^2\le C_\pi$ on $\region$, then
\begin{equation}
 \left|\frac{\widehat J_\pi}{\widehat m_\region}-\frac{J_\pi}{m_\region}\right|
 \le\frac{\delta_{J_\pi}+C_\pi\delta_m}{m_0}.
 \label{eq:ratio-perturbation}
\end{equation}
Accurate scores give accurate selection. Write $E_\pi=E_\region(P_\pi)$, let
$\widehat\pi$ minimize the predicted squared error $\Eh_\pi^2$ and $\pi^\star$
the true one over the same candidates. If $|\Eh_\pi^2-E_\pi^2|\le\delta_\pi$ for
every candidate, then
\begin{equation}
 E_{\widehat\pi}^2-E_{\pi^\star}^2\le\delta_{\widehat\pi}+\delta_{\pi^\star},
 \label{eq:selection-regret}
\end{equation}
since
$E_{\widehat\pi}^2\le\Eh_{\widehat\pi}^2+\delta_{\widehat\pi}
\le\Eh_{\pi^\star}^2+\delta_{\widehat\pi}
\le E_{\pi^\star}^2+\delta_{\pi^\star}+\delta_{\widehat\pi}$.

\FloatBarrier
\section{Method details}

\subsection{Input normalization}
\label{app:scaling}

For $\bar X=M/\|M\|_F$ and $\bar A=\bar X\bar X^\top$, the Gram-power bound of
\citet[Section~3.3]{cans2025} is
\begin{equation}
 s_8=\|\bar A^2\|_F^{1/4}=\Bigl(\sum_i x_i^8\Bigr)^{1/8},
 \qquad
 \max_i x_i\le s_8\le r^{1/8}\max_i x_i,
 \label{eq:scale}
\end{equation}
and $s_8\le1$ because $\sum_i x_i^2=1$. Rescaling does not change the polar
factor. The first-iteration products give both the scale and the moments. If $X_0=\bar X/s$, then $A_0=\bar A/s^2$ and $A_0^2=\bar A^2/s^4$, so each iteration-zero moment is a rescaled reduction of $\bar A$ or $\bar A^2$. The same
scale is used for the measurement pass, the spectral grid, and scoring.
\Cref{fig:scaling} shows that this bound is nearly tight for every matrix
type, unlike the Frobenius norm, and that better normalization alone accounts
for only part of the gain from spectral selection.

\begin{figure}[!htbp]
\centering
\includegraphics[width=\textwidth]{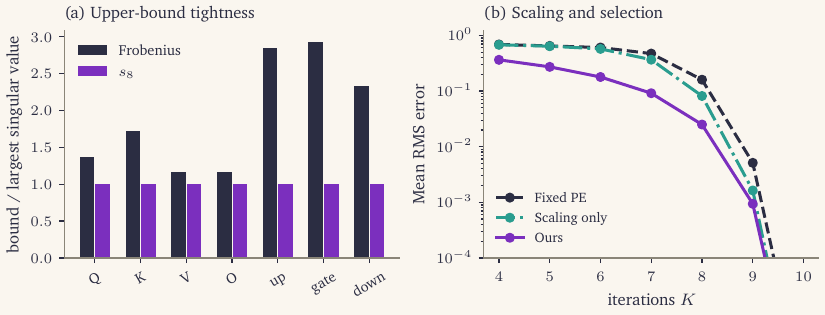}
\caption{\textbf{Normalization explains part of the gain.} \textbf{(a)} Ratio of
each upper bound to the largest singular value, by matrix type. \textbf{(b)}
Fixed PE, fixed PE with the Gram-power scale (scaling only), and spectral estimation.}
\label{fig:scaling}
\end{figure}

\subsection{Candidates and fitting}
\label{app:dictionary}

Each candidate is a complete PE routine given by a design lower endpoint and a
depth. Candidates are checked for positive iteration responses. For an iteration $p(x)=x(a+bx^2+cx^4)$ on $[0,u]$, positivity reduces to the
quadratic $a+bt+ct^2$ on $[0,u^2]$, and the extreme outputs occur at the
endpoints or at roots of $a+3bx^2+5cx^4$. Candidates that fail are removed.

The convex programs in \cref{eq:fit} use the row scaling
$D=\diag(1/d_q)$ with $d_q=\max\{\max_b|\Phi_{qb}|,|\yh_q|,d_{\min}\}$, which
puts measurements of different magnitudes on a common scale. Each matrix is
fitted independently. \Cref{tab:polar-settings} lists all settings.

\paragraph{Ambiguity of the fitted spectrum.}
Finitely many measurements can admit many spectral measures.
\Cref{fig:nonid} constructs two grid measures that match the same
measurements to numerical tolerance, the maximum-entropy fit and an extreme
point of the feasible set. Their point masses differ substantially, but their
cumulative distributions nearly coincide, so spectral averages such as the RMS
score are similar under both (\cref{thm:transfer}). Maximum entropy selects one
representative among the feasible measures.

\begin{figure}[!htbp]
\centering
\includegraphics[width=\textwidth]{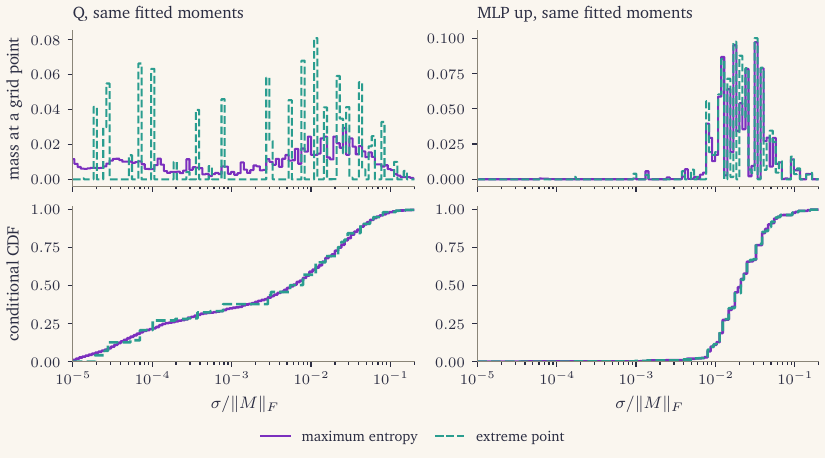}
\caption{\textbf{Different spectral measures can match the same measurements.}
Maximum-entropy and extreme-point grid measures for representative attention
and MLP spectra. The top row shows the mass on the estimator grid and the bottom
row the conditional CDF on the scoring interval.}
\label{fig:nonid}
\end{figure}

\begin{table}[!htbp]
\centering\small
\caption{\textbf{Estimator and candidate settings.}}
\label{tab:polar-settings}
\begin{tabular}{ll}
\toprule
Component & Setting\\
\midrule
Scale & $1.005\,s_8$, rounded up to the grid $10^{-j/24}$, $j=0,\ldots,15$\\
Scoring cutoff & $x_\star=10^{-5}$ in Frobenius-normalized coordinates\\
Spectral grid & zero, $24$ points below the cutoff, $120$ in the scoring interval\\
Measurements & four moments per iteration ($0$--$9$) and the final second moment\\
Entropy tolerance & $\kappa=3\times10^{-4}$\\
Row-scale floor & $d_{\min}=10^{-10}$\\
Solver & CVXPY with Clarabel\\
Design lower endpoints & $49$ log-spaced values in $[10^{-5},10^{-1}]$\\
Candidate depths & $K=4,\ldots,10$\\
Remez tolerance / iteration cap & $10^{-15}$ / $300$\\
Endpoint cushion & $\ell\leftarrow\max(\ell,0.02u)$\\
Coefficient safety scale & $(1.01,1.01^3,1.01^5)$ before the final iteration\\
Positivity margin & $10^{-9}$\\
\bottomrule
\end{tabular}
\end{table}

\subsection{Periodic measurement}
\label{app:schedule}

Periodic recomputation in the background is a standard way to amortize
preconditioner updates \citep{anil2020scalable}. \Cref{alg:schedule} applies it here. A measurement pass produces the current update and the measurements, and
fitting and selection run in the background while the previous routine stays
in use.

\begin{algorithm}[!htbp]
\caption{Periodic spectral estimation.}
\label{alg:schedule}
\small
\makeatletter
\def\theHALG@line{schedule.\arabic{ALG@line}}
\makeatother
\begin{algorithmic}[1]
\Require Period $T$, measurement routine $\mathcal S_{\mathrm{meas}}$ of depth $K_{\mathrm{meas}}$, candidate set, selection rule
\State Use $\mathcal S_{\mathrm{meas}}$ for every matrix
\For{optimizer steps $t=0,1,\ldots$}
  \State Form the current momenta and their normalization
  \If{a finished fit is available}
    \State Switch to the newly selected routines
  \EndIf
  \If{a measurement is due and no fit is running}
    \State Run $\mathcal S_{\mathrm{meas}}$, record the moments, and use its output for the update
    \State Start fitting and selection in the background
  \Else
    \State Apply the current routines
  \EndIf
\EndFor
\end{algorithmic}
\end{algorithm}

If steps between measurements use mean depth $\bar K$, the average number of
iterations per step is
\begin{equation}
 K_{\rm eff}=\bar K+\frac{K_{\mathrm{meas}}-\bar K}{T}.
 \label{eq:amortized}
\end{equation}

\FloatBarrier
\section{Offline experiments}
\label{app:data}

\subsection{Data}

We use Muon momentum matrices saved during a FineWeb run of a small GPT model
(\cref{tab:offline-data}). The held-out run (\cref{app:heldout}) and
\cref{fig:spectra} use the $160$M Muon baseline of our pretraining experiments
(\cref{app:training}).

\begin{table}[!htbp]
\centering\small
\caption{\textbf{Offline data.}}
\label{tab:offline-data}
\begin{tabular}{lll}
\toprule
& Primary run & Held-out run\\
\midrule
Dataset & FineWeb & Nemotron-CC v2 High-Quality\\
Model & $12$ layers, width $768$ & $24$ layers, width $768$ ($160$M Muon baseline)\\
Checkpoints & every $2{,}000$ steps, $2$k--$30$k & step $1$, then every $500$ steps to $4{,}500$\\
Matrices & $720$ attention, $540$ MLP & $1{,}680$\\
Matrix types & \multicolumn{2}{l}{Q, K, V, O, MLP up, MLP gate, MLP down}\\
Muon momentum & $0.95$ & $0.9586$\\
\bottomrule
\end{tabular}
\end{table}

\subsection{Fitted spectra across layers and training}
\label{app:ridge-spectra}

\Cref{fig:ridge-layers,fig:ridge-time} compare the fitted and SVD spectra for
every matrix type, across layers and across training. The fits follow the
SVD spectra closely throughout.

\begin{figure}[!htbp]
\centering
\includegraphics[width=\textwidth]{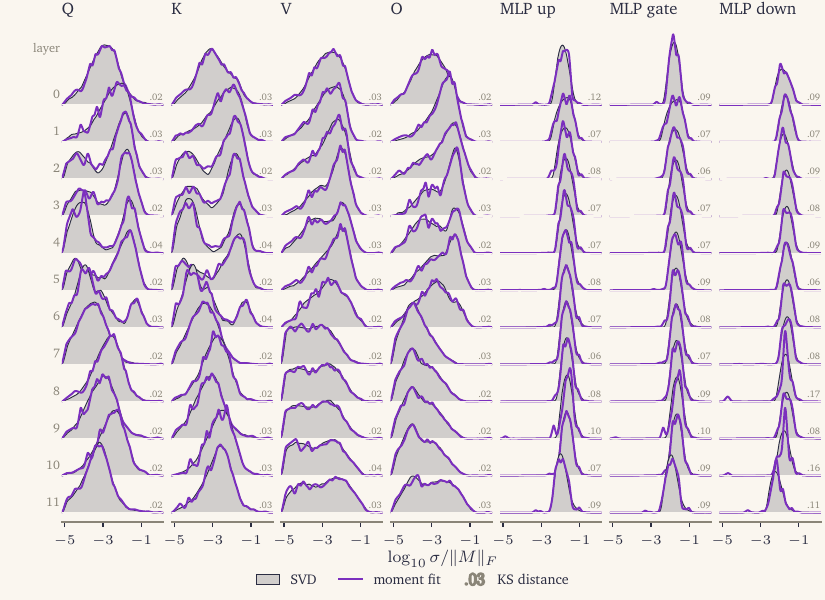}
\caption{\textbf{Fitted and SVD spectra across layers} (step $16$k), with the SVD in
grey and the moment fit in purple. Both are restricted to $\sigma/\|M\|_F\ge10^{-5}$ and
smoothed in $\log_{10}\sigma$. The numbers give the largest gap between the two
CDFs.}
\label{fig:ridge-layers}
\end{figure}

\begin{figure}[!htbp]
\centering
\includegraphics[width=\textwidth]{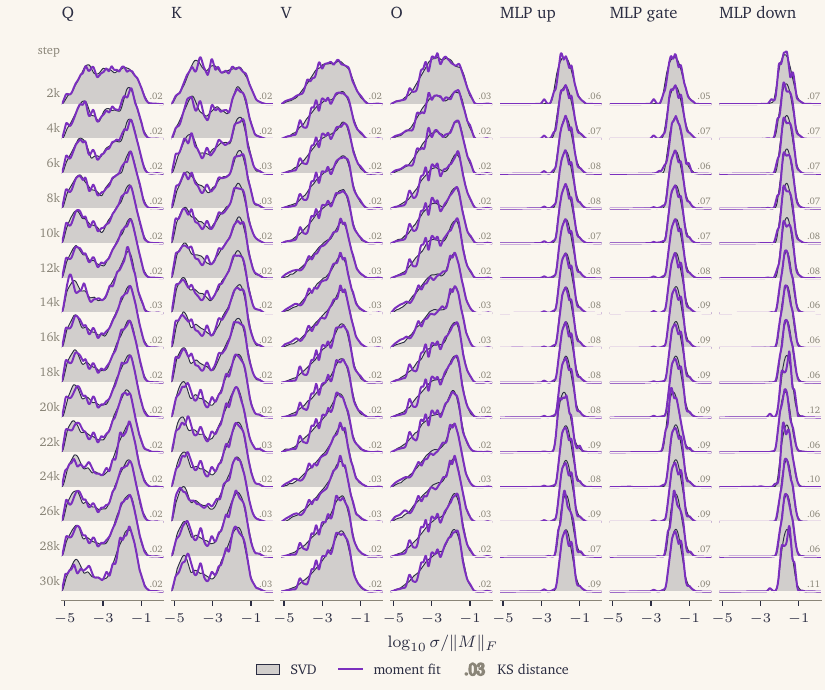}
\caption{\textbf{Fitted and SVD spectra across training} (layer $5$). Same
conventions as \cref{fig:ridge-layers}.}
\label{fig:ridge-time}
\end{figure}

\subsection{Fixed-depth errors and the choice of score}
\label{app:fixed}

\Cref{tab:fixed} extends \cref{tab:oracle-depth} to all depths and adds the RMS
oracle, which selects by exact RMS error. The minimax oracle, used as the
oracle throughout the paper, selects by exact worst-case error. On attention
matrices spectral estimation nearly matches the RMS oracle, and on MLP matrices
it is close at shallow depths. It has lower RMS error than the minimax oracle
on attention through $K=9$ and on MLP through $K=6$, and higher worst-case
error.

We score candidates by RMS error because the fitted spectrum can predict it.
RMS error is an average over the spectrum, which \cref{thm:transfer} covers.
Worst-case error depends on the extreme scored singular values, which finitely
many moments do not determine. \Cref{tab:minimax-fit} instead applies the
minimax rule to the fitted spectrum, on the interval that keeps all but $0.5/r$
of the fitted mass in each tail. Against its own oracle, it has much larger
regret than the RMS rule has against the RMS oracle, and it misses about a fifth
of the MLP accuracy targets, compared with at most $6\%$ for the RMS rule.

\begin{table}[!htbp]
\centering\small
\caption{\textbf{Scoring rules on the fitted spectrum.} Regret is the mean error of the selected routine minus that of the matching oracle, in the rule's own
criterion (worst-case error for minimax, RMS error for ours). Misses are the share of MLP matrices whose true error exceeds $\tau$ under the target rule.}
\label{tab:minimax-fit}
\setlength{\tabcolsep}{4pt}
\begin{tabular}{lrrrr@{\hspace{10pt}}rrr}
\toprule
& \multicolumn{4}{c}{Regret} & \multicolumn{3}{c}{MLP misses (\%)}\\
\cmidrule(lr){2-5}\cmidrule(lr){6-8}
& \multicolumn{2}{c}{Attention} & \multicolumn{2}{c}{MLP} & \multicolumn{3}{c}{$\tau$}\\
Rule on the fit & $K=5$ & $K=8$ & $K=5$ & $K=8$ & $10^{-1}$ & $10^{-2}$ & $10^{-3}$\\
\midrule
Minimax & $0.0014$ & $0.0038$ & $0.283$ & $0.0069$ & $20.0$ & $18.7$ & $21.1$\\
RMS (ours) & $\best{3\!\times\!10^{-5}}$ & $\best{0.00068}$ & $\best{0.0036}$ & $\best{0.0010}$ & $\best{0.0}$ & $\best{5.7}$ & $\best{5.4}$\\
\bottomrule
\end{tabular}
\end{table}

\begin{table}[!htbp]
\centering\footnotesize
\caption{\textbf{Fixed-depth RMS and worst-case error.} Means over attention
and MLP matrices. Oracle is the minimax oracle, and the RMS oracle selects by
exact RMS error. All methods use the same candidates and scoring interval.}
\label{tab:fixed}
\setlength{\tabcolsep}{2pt}
\begin{tabular}{crrrr@{\hspace{6pt}}rrr}
\toprule
& \multicolumn{4}{c}{RMS error} & \multicolumn{3}{c}{Worst-case error}\\
\cmidrule(lr){2-5}\cmidrule(lr){6-8}
$K$ & Fixed PE & Ours & Oracle & RMS oracle & Fixed PE & Ours & Oracle\\
\midrule
\multicolumn{8}{l}{\emph{Attention}}\\
$4$ & $0.729$ & $\best{0.618}$ & $0.718$ & $\best{0.618}$ & $0.994$ & $0.995$ & $\best{0.992}$\\
$5$ & $0.667$ & $\best{0.472}$ & $0.653$ & $\best{0.472}$ & $0.976$ & $0.982$ & $\best{0.966}$\\
$6$ & $0.604$ & $\best{0.311}$ & $0.583$ & $\best{0.311}$ & $0.904$ & $0.934$ & $\best{0.868}$\\
$7$ & $0.465$ & $0.158$ & $0.402$ & $\best{0.157}$ & $0.665$ & $0.774$ & $\best{0.582}$\\
$8$ & $0.159$ & $0.0431$ & $0.104$ & $\best{0.0424}$ & $0.222$ & $0.353$ & $\best{0.161}$\\
$9$ & $0.00515$ & $0.00164$ & $0.00229$ & $\best{0.00144}$ & $0.00753$ & $0.0121$ & $\best{0.00496}$\\
$10$ & $3.86\!\times\!10^{-7}$ & $1.55\!\times\!10^{-7}$ & $1.52\!\times\!10^{-7}$ & $\best{1.49\!\times\!10^{-7}}$ & $\best{5.56\!\times\!10^{-7}}$ & $1.15\!\times\!10^{-6}$ & $1.14\!\times\!10^{-6}$\\
\midrule
\multicolumn{8}{l}{\emph{MLP}}\\
$4$ & $0.638$ & $0.0281$ & $0.106$ & $\best{0.0252}$ & $0.994$ & $0.329$ & $\best{0.157}$\\
$5$ & $0.626$ & $0.00694$ & $0.0179$ & $\best{0.00330}$ & $0.976$ & $0.0643$ & $\best{0.0266}$\\
$6$ & $0.612$ & $0.00416$ & $0.00449$ & $\best{7.65\!\times\!10^{-4}}$ & $0.904$ & $0.0207$ & $\best{0.00647}$\\
$7$ & $0.482$ & $0.00352$ & $3.60\!\times\!10^{-4}$ & $\best{1.08\!\times\!10^{-4}}$ & $0.665$ & $0.00968$ & $\best{5.48\!\times\!10^{-4}}$\\
$8$ & $0.162$ & $0.00101$ & $1.13\!\times\!10^{-6}$ & $\best{7.03\!\times\!10^{-7}}$ & $0.222$ & $0.00153$ & $\best{1.78\!\times\!10^{-6}}$\\
$9$ & $0.00511$ & $1.00\!\times\!10^{-5}$ & $\best{2.06\!\times\!10^{-12}}$ & $\best{2.06\!\times\!10^{-12}}$ & $0.00753$ & $1.55\!\times\!10^{-5}$ & $\best{2.53\!\times\!10^{-12}}$\\
$10$ & $3.86\!\times\!10^{-7}$ & $3.99\!\times\!10^{-10}$ & $1.30\!\times\!10^{-16}$ & $\best{2.79\!\times\!10^{-18}}$ & $5.56\!\times\!10^{-7}$ & $5.16\!\times\!10^{-10}$ & $\best{2.26\!\times\!10^{-18}}$\\
\bottomrule
\end{tabular}
\end{table}

\subsection{Held-out run}
\label{app:heldout}

We apply the method unchanged to the held-out run (\cref{tab:heldout}). On
attention matrices spectral estimation stays close to the RMS oracle
and well below fixed PE, and on this run it also has lower worst-case error
than fixed PE. On MLP matrices it stays far below fixed PE, while the minimax
oracle is lower in both errors.

\begin{table}[!htbp]
\centering\footnotesize
\caption{\textbf{Held-out run.} Mean errors over all checkpoints with unchanged settings. Oracle is the minimax oracle.}
\label{tab:heldout}
\setlength{\tabcolsep}{2pt}
\begin{tabular}{crrrr@{\hspace{6pt}}rrr}
\toprule
& \multicolumn{4}{c}{RMS error} & \multicolumn{3}{c}{Worst-case error}\\
\cmidrule(lr){2-5}\cmidrule(lr){6-8}
$K$ & Fixed PE & Ours & Oracle & RMS oracle & Fixed PE & Ours & Oracle\\
\midrule
\multicolumn{8}{l}{\emph{Attention}}\\
$5$ & $0.640$ & $0.153$ & $0.348$ & $\best{0.149}$ & $0.976$ & $0.550$ & $\best{0.516}$\\
 $8$ & $0.161$ & $0.0123$ & $0.0288$ & $\best{0.00977}$ & $0.222$ & $0.126$ & $\best{0.0444}$\\
\midrule
\multicolumn{8}{l}{\emph{MLP}}\\
$5$ & $0.623$ & $0.0315$ & $0.0247$ & $\best{0.0214}$ & $0.976$ & $0.0539$ & $\best{0.0373}$\\
 $8$ & $0.162$ & $0.00232$ & $4.79\!\times\!10^{-6}$ & $\best{1.87\!\times\!10^{-6}}$ & $0.222$ & $0.00339$ & $\best{7.60\!\times\!10^{-6}}$\\
\bottomrule
\end{tabular}
\end{table}

\subsection{Ablations}
\label{app:ablations}

\paragraph{Measurements.}
\Cref{fig:moment-ablation} fits the spectrum from fewer measurements, either
from only the first iterations of the measurement pass or with fewer moment
orders per iteration. More iterations and more orders both help, mostly on MLP matrices.

\paragraph{Scoring cutoff.}
\Cref{fig:care-floor-ablation} repeats the offline comparison with lower
scoring cutoffs. A lower cutoff includes more of the small-singular-value tail
and makes the problem harder for every method.

\paragraph{Estimator settings.}
\Cref{tab:sensitivity} changes one setting at a time. Attention results are
insensitive to all of them. On MLP matrices at $K=5$, a small positive entropy
tolerance, $\kappa\in[10^{-4},3\times10^{-4}]$, gives the lowest error, and we use
$\kappa=3\times10^{-4}$ for all experiments.

\begin{figure}[!htbp]
\centering
\includegraphics[width=\textwidth]{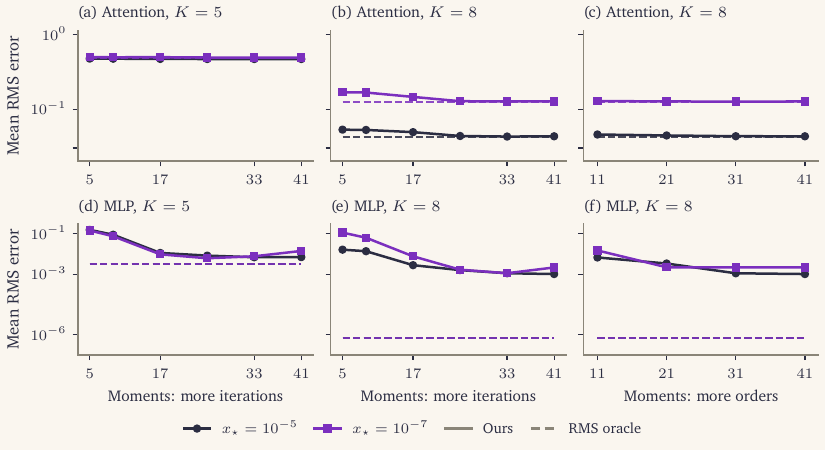}
\caption{\textbf{Fewer measurements.} The left and center columns use measurements from
only the first iterations of the measurement pass, and the right column fewer
moment orders at every iteration. Colors give $x_\star$, the scoring cutoff and
smallest design endpoint. Solid lines show spectral estimation and dashed lines
the RMS oracle.}
\label{fig:moment-ablation}
\end{figure}

\begin{figure}[!htbp]
\centering
\includegraphics[width=\textwidth]{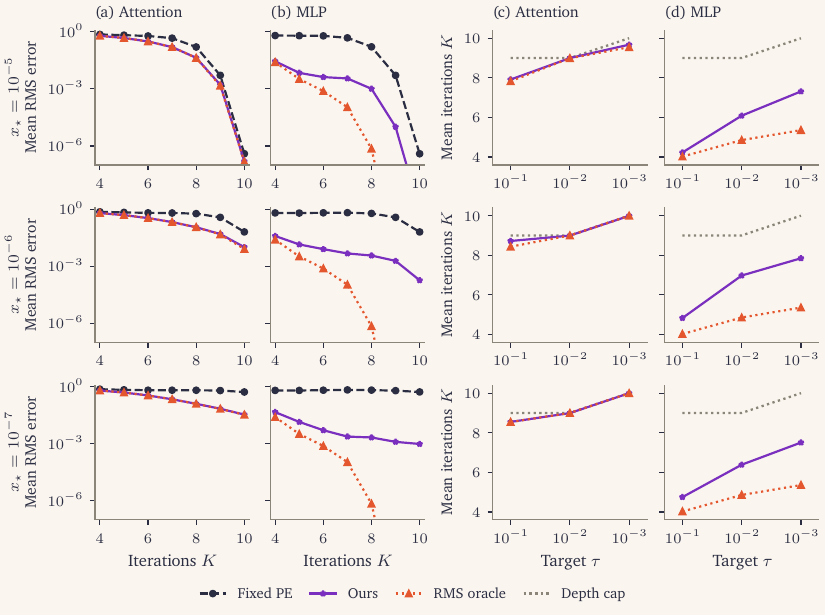}
\caption{\textbf{Lower scoring cutoffs.} Each row repeats \cref{fig:experiments}
with the RMS oracle at the cutoff $x_\star$ shown on the left. Panels \textbf{(a--b)} show RMS
error at fixed depth and \textbf{(c--d)} the mean depth at target $\tau$, capped
at the largest allowed depth (dotted). At the lower cutoffs, almost no attention matrix
meets $\tau\le10^{-2}$ within the cap, for either method.}
\label{fig:care-floor-ablation}
\end{figure}

\begin{table}[!htbp]
\centering\footnotesize
\caption{\textbf{Estimator settings.} Mean RMS error of the selected routine,
changing one setting at a time. RMS-oracle rows use the corresponding
candidate set.}
\label{tab:sensitivity}
\begin{tabular}{llrrrr}
\toprule
Setting & Value & Attn $K{=}5$ & Attn $K{=}8$ & MLP $K{=}5$ & MLP $K{=}8$ \\
\midrule
Default & \cref{tab:polar-settings} & 0.472 & 0.0431 & 0.00694 & 0.00101 \\
\midrule
Entropy tolerance $\kappa$ & $0$ (best fit only) & 0.472 & 0.0437 & 0.00729 & 0.000622 \\
 & $10^{-4}$ & 0.472 & 0.0430 & 0.00684 & 0.000777 \\
 & $10^{-3}$ & 0.472 & 0.0434 & 0.00862 & 0.00140 \\
 & $3\times10^{-3}$ & 0.472 & 0.0439 & 0.0131 & 0.00216 \\
Grid, scoring interval & 60 & 0.472 & 0.0435 & 0.0199 & 0.00223 \\
 & 240 & 0.472 & 0.0437 & 0.00870 & 0.00161 \\
Grid, below cutoff & 12 & 0.472 & 0.0434 & 0.00696 & 0.00112 \\
 & 48 & 0.472 & 0.0430 & 0.00692 & 0.000978 \\
Endpoints per decade & 6 & 0.473 & 0.0445 & 0.00672 & 0.00105 \\
 & 24 & 0.472 & 0.0429 & 0.00704 & 0.000986 \\
\midrule
RMS oracle & 6 per decade & 0.473 & 0.0437 & 0.00346 & $1.17\times10^{-6}$ \\
 & 12 per decade & 0.472 & 0.0424 & 0.00330 & $7.03\times10^{-7}$ \\
 & 24 per decade & 0.472 & 0.0422 & 0.00326 & $6.59\times10^{-7}$ \\
\bottomrule
\end{tabular}
\end{table}

\subsection{Target rule}
\label{app:target-reliability}

The target rule picks the shortest candidate whose predicted RMS error is at
most $\tau$. \Cref{tab:target-margin} adds a safety margin $m$ by requiring the
prediction to be at most $\tau/m$. On attention matrices the predictions are
accurate, and a small margin removes every miss at little extra depth. On MLP
matrices a small fraction of matrices miss the target even with a large margin.
These are MLP-down matrices with a few isolated singular values far below the
rest of the spectrum. Each such value adds a large error, and the moments do
not detect it.

\begin{table}[!htbp]
\centering\footnotesize
\caption{\textbf{Target rule with a safety margin $m$.} A candidate is accepted
if its predicted RMS error is at most $\tau/m$. Misses are the fraction of matrices
whose true RMS error exceeds $\tau$.}
\label{tab:target-margin}
\begin{tabular}{llrrrrr}
\toprule
& & \multicolumn{2}{c}{Mean depth} & \multicolumn{2}{c}{Misses (\%)} & \\
\cmidrule(lr){3-4}\cmidrule(lr){5-6}
$\tau$ & $m$ & Attn & MLP & Attn & MLP & Max error \\
\midrule
$10^{-1}$ & 1 & 7.91 & 4.23 & 1.9 & 0.0 & $0.124$ \\
 & 1.5 & 8.19 & 4.44 & 0.0 & 0.0 & $0.0780$ \\
 & 10 & 9.00 & 6.09 & 0.0 & 0.0 & $0.0567$ \\
 & RMS oracle & 7.83 & 4.02 & 0.0 & 0.0 & $0.0995$ \\
\midrule
$10^{-2}$ & 1 & 9.00 & 6.09 & 0.0 & 5.7 & $0.0567$ \\
 & 1.5 & 9.00 & 6.27 & 0.0 & 4.4 & $0.0567$ \\
 & 10 & 9.00 & 7.31 & 0.0 & 2.4 & $0.0389$ \\
 & RMS oracle & 8.99 & 4.85 & 0.0 & 0.0 & $0.00991$ \\
\midrule
$10^{-3}$ & 1 & 9.68 & 7.31 & 0.1 & 5.4 & $0.0389$ \\
 & 1.5 & 9.77 & 7.52 & 0.0 & 5.0 & $0.0384$ \\
 & 10 & 9.91 & 7.97 & 0.0 & 1.9 & $0.0326$ \\
 & RMS oracle & 9.55 & 5.36 & 0.0 & 0.0 & $0.000998$ \\
\bottomrule
\end{tabular}
\end{table}

\subsection{Reuse and per-type selection}
\label{app:reuse}

\Cref{tab:reuse} evaluates each saved matrix with a design endpoint chosen at an
earlier checkpoint, while recomputing its normalization. Reusing older choices
changes the error very little, and every variant stays far below fixed PE.

\Cref{tab:static-control} compares per-matrix selection with one fixed endpoint
per matrix type, chosen with exact SVD errors on early checkpoints, in the
spirit of AMO \citep{amo2026}. The per-type choice needs exact SVDs from the
same run and depends on which checkpoints are used to choose it. Spectral
estimation needs neither, has lower error at both depths, and gains most at
larger depth, where layers of the same type differ. Grouping would also help in
mixture-of-experts models with many expert matrices. Because the moments are
averages over the spectrum, averaging the measurements of a group gives the
moments of its pooled spectrum, so one fit can serve the whole group.

\begin{table}[!htbp]
\centering\small
\caption{\textbf{Reusing earlier choices.} Mean RMS error on checkpoints from
step $4$k. Columns give the checkpoint whose choice is used, namely the current one, the previous one, the first one,
or one updated only at every fourth checkpoint. Oracle is the minimax oracle.}
\label{tab:reuse}
\begin{tabular}{crrrrrrr}
\toprule
$K$ & Current & Previous & First & Every fourth & RMS oracle & Oracle & Fixed PE\\
\midrule
5 & $0.275$ & $0.275$ & $0.278$ & $0.275$ & $\best{0.273}$ & $0.381$ & $0.650$\\
8 & $0.0254$ & $0.0257$ & $0.0261$ & $0.0256$ & $\best{0.0246}$ & $0.0601$ & $0.160$\\
\bottomrule
\end{tabular}
\end{table}

\begin{table}[!htbp]
\centering\small
\caption{\textbf{Per-type versus per-matrix selection.} Mean RMS error on the
later checkpoints ($16$k--$30$k), with the per-type endpoints chosen on the earlier ones. Oracle
is the minimax oracle.}
\label{tab:static-control}
\begin{tabular}{crrrr}
\toprule
$K$ & Per type & Ours & RMS oracle & Oracle\\
\midrule
5 & $0.286$ & $0.283$ & $\best{0.282}$ & $0.383$\\
8 & $0.0290$ & $0.0273$ & $\best{0.0266}$ & $0.0639$\\
\bottomrule
\end{tabular}
\end{table}

\FloatBarrier
\section{Cost}
\label{app:pretraining-cost}
\label{app:cpu-cost}

For $M\in\R^{r\times n}$ with $r\le n$, one quintic iteration costs $2r^2n+r^3$
multiply--adds. The reductions of $A_j$ and $A_j^2$ add about $3r^2$, a fraction
$3/(2n+r)$ of an iteration, and use only matrices the iteration already holds.
The fit and the scoring work on a fixed number of measurements and a fixed
grid, so their cost does not depend on the matrix size, and they run on the CPU.
\Cref{tab:complexity} compares these costs with those of exact spectra.

\Cref{fig:gpu-cost,tab:frontier-shapes} report GPU times on one H200 for random
square and wide matrices and for matrix shapes of recent models trained with
Muon. The reductions take between $1$ and $3$~ms at every tested size, and the
fit takes $4$~ms of CPU time per matrix. The Newton--Schulz pass and the exact spectrum grow with the matrix.

The reductions and the fit run once every $T$ steps, and the
$K_{\mathrm{meas}}$-iteration measurement pass replaces one $K$-iteration pass
(\cref{eq:amortized}). For a model with $N$ hidden matrices, this adds
$Nt_{\rm red}/T$ of GPU time, $(K_{\mathrm{meas}}-K)/T$ iterations, and
$Nt_{\rm fit}/T$ of CPU time per step, where $t_{\rm red}$ and $t_{\rm fit}$
are the per-matrix times above. For our $1$B Muon model ($N=168$, $K=5$,
$T=200$, about $40$~s per step), this is under $1$~ms of reductions, $0.025$
iterations, and $4$~ms of CPU time per step.

\begin{table}[!htbp]
\centering\small
\caption{\textbf{Added cost of each spectrum estimator} for $M\in\R^{r\times n}$,
$r\le n$. One Newton--Schulz iteration costs $2r^2n+r^3$.}
\label{tab:complexity}
\begin{tabular}{llll}
\toprule
Method & Extra products & Work that grows with $M$ & Fixed work\\
\midrule
Ours & none & $\approx3r^2$ per iteration & fit, scoring\\
SVD (singular values) & -- & $O(r^2n)$ & --\\
Gram eigenvalues & $MM^\top$ & $O(r^2n+r^3)$ & --\\
\bottomrule
\end{tabular}
\end{table}

\begin{figure}[!htbp]
\centering
\includegraphics[width=\textwidth]{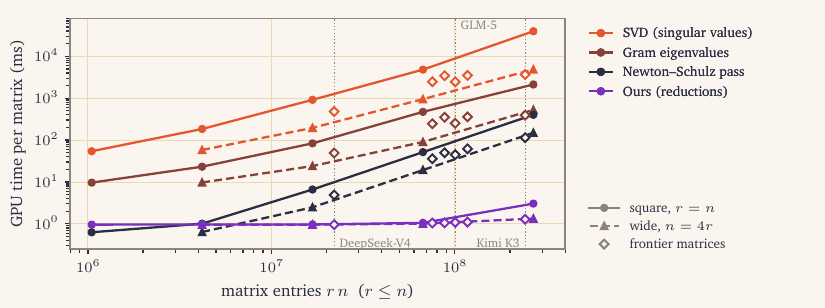}
\caption{\textbf{GPU time per matrix versus matrix size} (one H200, median).
Ours denotes the compiled reductions. Circles mark square matrices, triangles
wide matrices, and diamonds the matrices of \cref{tab:frontier-shapes}. The
Newton--Schulz pass runs in bf16, and the SVD and the Gram eigenvalues in fp32. Matrices are random Gaussian.}
\label{fig:gpu-cost}
\end{figure}

\begin{table}[!htbp]
\centering\small
\caption{\textbf{Matrix shapes of recent models trained with Muon}
\citep{deepseek2026v4,glm5team2026glm5,kimi2026k3} and GPU time per matrix in
milliseconds (one H200, median). The NS pass runs ten bf16
iterations, ours is the compiled reductions, and the last two columns compute
the eigenvalues of $MM^\top$ and the singular values. The DeepSeek-V4 attention output is the
second matrix of its grouped output projection.}
\label{tab:frontier-shapes}
\setlength{\tabcolsep}{3.5pt}
\begin{tabular}{lllrrrr}
\toprule
Model & Matrix & Shape & NS pass & Ours & Gram eig. & SVD\\
\midrule
DeepSeek-V4 & routed expert & $3072\times7168$ & $4.9$ & $\best{0.96}$ & $49$ & $480$\\
 & attention output & $7168\times16384$ & $62$ & $\best{1.1}$ & $358$ & $3481$\\
GLM-5 & attention output & $6144\times16384$ & $45$ & $\best{1.1}$ & $252$ & $2461$\\
 & dense FFN & $6144\times12288$ & $36$ & $\best{1.1}$ & $246$ & $2473$\\
Kimi K3 & attention output & $7168\times12288$ & $50$ & $\best{1.1}$ & $349$ & $3446$\\
 & dense FFN & $7168\times33792$ & $114$ & $\best{1.3}$ & $392$ & $3667$\\
\bottomrule
\end{tabular}
\end{table}

\FloatBarrier
\section{Inverse square roots}
\label{app:inverse}

For a positive-definite matrix $S$ scaled to have spectrum in $(0,1]$, consider
the coupled iteration \citep{higham1997stable,anil2020scalable}
\begin{equation}
 Y_0=S,\quad Z_0=I,\quad H_j=Z_jY_j,\quad Q_j=a_jI+b_jH_j,\quad
 Y_{j+1}=Y_jQ_j,\quad Z_{j+1}=Q_jZ_j,
 \label{eq:inverse-iteration}
\end{equation}
which computes $S^{-1/2}$ and is used in dense preconditioners, including the
scaled CANS construction of Online KL Shampoo \citep{cans2025,okls2026}.

\begin{proposition}[Scalar recursion of the coupled iteration]
\label{prop:inverse}
For an eigenvalue $\lambda>0$ of $S$, let $\zeta_j$ be the corresponding
eigenvalue of $Z_j$ and $z_j=\sqrt\lambda\,\zeta_j$. In exact arithmetic,
\begin{equation}
 z_0=\sqrt\lambda,
 \qquad
 z_{j+1}=a_jz_j+b_jz_j^3,
 \label{eq:inverse-correspondence}
\end{equation}
and the eigenvalue of $H_j$ is $z_j^2$. Thus $z_j$ follows the same scalar
recursion as in the polar case, and $z_j-1$ is the relative inverse-root error.
\end{proposition}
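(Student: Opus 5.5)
The argument is an induction on $j$ showing that every matrix produced by the coupled iteration \cref{eq:inverse-iteration} is a polynomial in $S$. Polynomials in a single matrix commute with one another and share its eigenvectors.

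\textbf{Induction hypothesis.} For each $j$ there are scalar polynomials $f_j$ and $g_j$ with $Y_j=f_j(S)$ and $Z_j=g_j(S)$.

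\textbf{Base case.} This holds for $Y_0=S$ and $Z_0=I$.

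\textbf{Inductive step.} Suppose $Y_j$ and $Z_j$ are polynomials in $S$.
\begin{itemize}
\item Then $H_j=Z_jY_j$ is a polynomial in $S$.
\item So is $Q_j=a_jI+b_jH_j$.
\item Consequently $Y_{j+1}=Y_jQ_j$ and $Z_{j+1}=Q_jZ_j$ are polynomials in $S$ as well.
\end{itemize}
Since $S$ is symmetric positive definite, write $S=W\diag(\lambda)W^\top$. Every matrix above is then diagonal in the basis $W$. Working on a single eigenvector with eigenvalue $\lambda$, the matrix recursion becomes a scalar recursion. Let $\eta_j,\zeta_j,h_j,q_j$ denote the eigenvalues of $Y_j,Z_j,H_j,Q_j$ on that eigenvector. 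They satisfy
\[
\eta_0=\lambda,\quad \zeta_0=1,\quad h_j=\zeta_j\eta_j,\quad q_j=a_j+b_jh_j,\quad \eta_{j+1}=\eta_jq_j,\quad \zeta_{j+1}=q_j\zeta_j.
\]

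\textbf{Invariant.} The key observation is that the ratio $\eta_j/\zeta_j$ is preserved. Both $\eta_j$ and $\zeta_j$ are multiplied by the same factor $q_j$ at every step, so $\eta_j=\lambda\zeta_j$ for all $j$. This step requires no division and so holds even if some $q_j=0$. Setting $z_j=\sqrt\lambda\,\zeta_j$ gives
\[
h_j=\zeta_j\eta_j=\lambda\zeta_j^2=z_j^2,
\]
which is the claimed eigenvalue of $H_j$. Multiplying $\zeta_{j+1}=(a_j+b_jz_j^2)\zeta_j$ by $\sqrt\lambda$ then gives
\[
z_{j+1}=a_jz_j+b_jz_j^3,
\]
with $z_0=\sqrt\lambda$. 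This is the recursion \cref{eq:inverse-correspondence}.

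\textbf{Interpretation.} The recursion is the cubic case of the scalar map $z_{j+1}=p_j(z_j)$ from the polar setting, taking $p_j(x)=a_jx+b_jx^3$ and starting value $\sqrt\lambda$. The target $Z_j\to S^{-1/2}$ corresponds to $\zeta_j\to\lambda^{-1/2}$, that is, to $z_j\to1$. Hence
\[
z_j-1=\frac{\zeta_j-\lambda^{-1/2}}{\lambda^{-1/2}},
\]
which is exactly the relative inverse-root error.

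\textbf{Main obstacle.} The only real point is the simultaneous diagonalizability established by the polynomial induction. Once every iterate is known to be a polynomial in $S$, the rest is bookkeeping through the invariant $\eta_j=\lambda\zeta_j$.
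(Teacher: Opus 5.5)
Your proof is correct and takes essentially the same approach as the paper's: every iterate is a polynomial in $S$, and the invariant $Y_j=SZ_j$ (your eigenvalue-level $\eta_j=\lambda\zeta_j$) gives $h_j=\lambda\zeta_j^2=z_j^2$. Multiplying $\zeta_{j+1}=(a_j+b_jz_j^2)\zeta_j$ by $\sqrt\lambda$ then yields the cubic recursion, exactly as in the paper.
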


\begin{proof}
All iterates are polynomials in $S$ and commute, and induction gives
$Y_j=SZ_j$. The eigenvalue of $H_j=Z_jY_j$ is therefore $\lambda\zeta_j^2=z_j^2$,
so $\zeta_{j+1}=(a_j+b_jz_j^2)\zeta_j$, and multiplying by $\sqrt\lambda$ gives
\cref{eq:inverse-correspondence}.
\end{proof}

With $\gamma=\sqrt\lambda$, reductions of matrices the iteration already forms
measure the functions in \cref{tab:inverse-rows}, so the method of the polar
case carries over. For $0<\ell<1$, the odd cubic minimizing
$\max_{x\in[\ell,1]}|ax+bx^3-1|$ is
\begin{equation}
 h=1+\ell+\ell^2,
 \qquad
 \bar x=\sqrt{h/3},
 \qquad
 b=\frac{2}{1-h-(2h/3)\bar x},
 \qquad
 a=-bh,
 \label{eq:cubiccoeff}
\end{equation}
where $\bar x$ is the interior extremum of the error. This is the cubic iteration
behind scaled Newton--Schulz and CANS \citep{chen2014scaling,cans2025}, and it
defines our cubic candidates.

\begin{table}[!htbp]
\centering
\small
\caption{\textbf{Measurements from the coupled iteration.}
Each function is averaged over the distribution of $\gamma$.}
\label{tab:inverse-rows}
\begin{tabular}{ll}
\toprule
Reduction & Function\\
\midrule
$\tr H_j/n$, $\|H_j\|_F^2/n$
    & $z_j^2$, $z_j^4$\\
$\tr Y_j/n$, $\|Y_j\|_F^2/n$
    & $\gamma z_j$, $\gamma^2 z_j^2$\\
$\langle Y_j,H_j\rangle_F/n$, $\|Y_jH_j\|_F^2/n$
    & $\gamma z_j^3$, $\gamma^2 z_j^6$\\
$\tr Z_j/n$, $\|Z_j\|_F^2/n$
    & $z_j/\gamma$, $z_j^2/\gamma^2$\\
$\langle Z_j,H_j\rangle_F/n$, $\|H_jZ_j\|_F^2/n$
    & $z_j^3/\gamma$, $z_j^6/\gamma^2$\\
\bottomrule
\end{tabular}
\end{table}

We test the extension on regularized momentum Gram matrices
$S=\bar M\bar M^\top+\eps I$ from the primary run (\cref{tab:inverse-settings}).
\Cref{fig:inverse} shows that selecting a cubic routine from the fitted
spectrum improves on the fixed cubic routine at every depth, with the largest
gains on the more concentrated spectra.

\begin{figure}[!htbp]
\centering
\includegraphics[width=\textwidth]{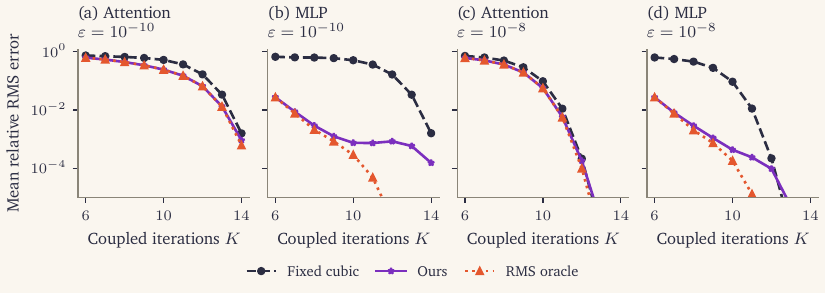}
\caption{\textbf{Inverse square roots.} Mean relative RMS error on regularized
momentum Gram matrices for the fixed cubic routine, spectral estimation, and the RMS
oracle over the same candidates.}
\label{fig:inverse}
\end{figure}

\begin{table}[!htbp]
\centering
\small
\caption{\textbf{Inverse-root test settings.}}
\label{tab:inverse-settings}
\begin{tabular}{ll}
\toprule
Component & Setting\\
\midrule
Test matrices
    & $\bar M\bar M^\top+\eps I$, $\eps\in\{10^{-10},10^{-8}\}$\\
Endpoints
    & $49$, log-spaced in $[10^{-5},10^{-1}]$\\
Candidate depths
    & $K=4,\ldots,18$\\
Grid
    & one low point, $150$ log-spaced\\
Entropy tolerance
    & $\kappa=3\times10^{-4}$\\
Measurement depth
    & $15$ ($\eps=10^{-10}$) or $13$ ($\eps=10^{-8}$)\\
\bottomrule
\end{tabular}
\end{table}

\FloatBarrier
\section{Pretraining details}
\label{app:training}

\subsection{Setup}
\label{app:pretraining-setup}

The model and optimizer follow \citet{okls2026}. \Cref{tab:pretraining-models}
lists the model, data, and training settings, and
\cref{tab:pretraining-optimizers} the hidden-matrix optimizer settings, which
are shared across model sizes. The hidden matrices of each layer (attention Q,
K, V, O and MLP up, gate, down) use Muon or OKLS, while embeddings, the output head,
normalization gains, and attention-gate projections use AdamW
\citep{loshchilov2019adamw}. The data order
is the same for every run. Each evaluation reads the next window of a held-out
shard, the same window for every run.

A hidden weight of shape $m\times n$ is initialized as
$W_{ij}\sim\mathcal N\bigl(0,[\sigma_*\sqrt{m/n}/(\sqrt m+\sqrt n)]^2\bigr)$,
embeddings as $\mathcal N(0,1)$, and the untied output head at zero, so hidden
gradients vanish at the first step. The learning rate warms up linearly, stays
constant, and decays linearly to zero over the final fraction $p_{\rm dec}$ of
training.

For Muon, with gradient $G_t\in\R^{m\times n}$,
\begin{align*}
 \mathcal B_t&=\beta \mathcal B_{t-1}+G_t, & N_t&=\beta \mathcal B_t+G_t,\\
 U_t&=\sqrt{m/n}\,\frac{\sqrt{\min(m,n)}}{\|\polar(N_t)\|_F}\polar(N_t), &
 \theta_{t+1}&=(1-\lambda_{\rm wd}\eta_t^2/\eta_{\rm peak})\theta_t-\eta_tU_t,
\end{align*}
where $\polar(N_t)$ is computed by the routine under study. Because the output
is rescaled to a fixed norm, routines differ only in the direction of the
update.

For OKLS, with the previous inverse-root factors $R_{\mathrm L},R_{\mathrm R}$,
\begin{align*}
 \mathcal B_t&=\beta_1\mathcal B_{t-1}+(1-\beta_1)G_t,
 &N_t&=(1-\beta_1)G_t+\beta_1\mathcal B_t,\\
 S_{\mathrm L}^+&=\beta_2S_{\mathrm L}+\tfrac{1-\beta_2}{n}(G_tR_{\mathrm R})(G_tR_{\mathrm R})^\top+\epsilon I,
 &S_{\mathrm R}^+&=\beta_2S_{\mathrm R}+\tfrac{1-\beta_2}{m}(R_{\mathrm L}G_t)^\top(R_{\mathrm L}G_t)+\epsilon I.
\end{align*}
The factors are symmetrized, their inverse roots $R_{\mathrm L}^+,R_{\mathrm R}^+$
are recomputed every step, and
\[
 \theta_{t+1}=(1-\lambda_{\rm wd}\eta_t^2/\eta_{\rm peak})\theta_t
 -\eta_t\chi R_{\mathrm L}^+N_tR_{\mathrm R}^+,
 \qquad
 \chi=\frac{\sqrt{m/n}}{\sqrt m+\sqrt n}
 \Bigl[\frac{1-\beta_1}{1+\beta_1}(1+2\beta_1-2\beta_1^3)\Bigr]^{-1/2}.
\]
The factors start from scaled $GG^\top$ and $G^\top G$ of the first nonzero
gradient. Compared with the OKLS setup, we hold the tokenizer, head dimension, MLP
expansion ratio, attention gate, initialization, and AdamW parameter group fixed
across model sizes, train for
slightly fewer steps, and use Polar Express instead of CANS for Muon.

\begin{table}[!htbp]
\centering\small
\caption{\textbf{Model and training settings.}}
\label{tab:pretraining-models}
\begin{tabular}{p{.30\textwidth}p{.19\textwidth}p{.19\textwidth}p{.19\textwidth}}
\toprule
Setting & $160$M & $300$M & $1$B\\
\midrule
Layers / width & $24$ / $768$ & $24$ / $1024$ & $24$ / $2048$\\
Query heads / KV heads & $12$ / $3$ & $16$ / $4$ & $32$ / $8$\\
Head dimension / MLP width & $64$ / $2304$ & $64$ / $3072$ & $64$ / $6144$\\
Non-embedding / total params & $163.1$M / $262.9$M & $289.9$M / $423.0$M & $1159.3$M / $1425.6$M\\
Attention / MLP & \multicolumn{3}{p{.63\textwidth}}{grouped-query attention with a per-head sigmoid gate / SwiGLU}\\
Normalization / positions & \multicolumn{3}{p{.63\textwidth}}{pre-RMSNorm and final RMSNorm ($\epsilon=10^{-6}$) / RoPE ($\theta=10^4$)}\\
Data & \multicolumn{3}{p{.63\textwidth}}{Nemotron-CC v2 High-Quality \citep{nvidia2025nemotronccv2,nvidia2025nemotronnano2}}\\
Tokenizer / vocabulary & \multicolumn{3}{p{.63\textwidth}}{\texttt{tiiuae/falcon-7b} / $65{,}024$}\\
Context / sequences per step & \multicolumn{3}{p{.63\textwidth}}{$4096$ / $1024$ ($4{,}194{,}304$ tokens)}\\
Steps / warmup & \multicolumn{3}{p{.63\textwidth}}{$4769$ ($20$B tokens) / $250$}\\
Validation & \multicolumn{3}{p{.63\textwidth}}{$4{,}194{,}304$ held-out tokens every $250$ steps}\\
AdamW (non-hidden params) & \multicolumn{3}{p{.63\textwidth}}{lr $3\times10^{-4}$, $(\beta_1,\beta_2)=(0.9,0.95)$, $\epsilon=10^{-8}$, no weight decay}\\
Precision & \multicolumn{3}{p{.63\textwidth}}{bf16 autocast, fp32 weights and optimizer state}\\
Clipping / dropout / z-loss & \multicolumn{3}{p{.63\textwidth}}{none}\\

\bottomrule
\end{tabular}
\end{table}

\begin{table}[!htbp]
\centering\small
\caption{\textbf{Hidden-matrix optimizer settings.} The decay coefficient
$\lambda_{\rm wd}$ enters as $\theta\leftarrow(1-\lambda_{\rm wd}\eta_t^2/\eta_{\rm peak})\theta$.}
\label{tab:pretraining-optimizers}
\begin{tabular}{lrr}
\toprule
Setting & Muon & OKLS\\
\midrule
Peak learning rate & $0.01202$ & $0.09434$\\
Momentum $\beta$ or $\beta_1$ & $0.9586$ & $0.9684$\\
Factor decay $\beta_2$ & -- & $0.9482$\\
Factor diagonal shift $\epsilon$ & -- & $10^{-9}$\\
Weight decay $\lambda_{\rm wd}$ & $1.008\times10^{-4}$ & $0.0303$\\
Initialization scale $\sigma_*$ & $0.1877$ & $0.07539$\\
Decay fraction $p_{\rm dec}$ & $0.7677$ & $0.7319$\\
Matrix-function precision & bf16 & fp32\\
\bottomrule
\end{tabular}
\end{table}

\subsection{Matrix-function routines}
\label{app:pretraining-method}

\paragraph{Muon.}
The routines use the normalization, candidates, and fitting of
\cref{app:scaling,app:dictionary} with the settings in
\cref{tab:polar-settings,tab:pretraining-routines}. The fixed baseline is $\PE[10^{-5},1]$
at the run's depth. The oracle recomputes the minimax choice from an exact SVD every $500$ steps and uses the fixed baseline until its first choice. Spectral
estimation also starts from the fixed baseline. Every $T$ steps it runs the
measurement pass, which
also produces that step's update, fits the moments on the CPU, and uses the
selected routine until the next measurement. If a fit fails, the previous
routine is kept. Our runs also record exact spectra at each measurement to measure
selection quality (\cref{app:pretraining-validation}), and these records never
enter selection.

\paragraph{OKLS.}
The inverse roots use the coupled iteration of \cref{app:inverse}, scaled by an
upper bound on the largest eigenvalue, in fp32 because the coupled iteration
drifts in bf16. The measurements include all traces, squared norms, and inner
products of $H_j$, $Y_j$, $Z_j$ and of the products the iteration already forms
(\cref{tab:inverse-rows}), and those involving $Z_j$ add information about small
eigenvalues. The fixed baseline uses the cubic routine designed for eigenvalues
above $10^{-7}\lambda_{\max}$. An exact eigendecomposition every $500$ steps
only updates $\lambda_{\max}$, which, like the Frobenius norm in PE, bounds the scaled spectrum by one, while the relative lower endpoint stays at $10^{-7}$.

\paragraph{Comparisons.}
\label{app:pretraining-period}
We use $T=200$ throughout, and \cref{app:pretraining-extra} also reports $T=500$.
 \Cref{fig:pretraining}
averages the training-loss gaps over short windows.

\begin{table}[!htbp]
\centering\small
\caption{\textbf{Pretraining routine settings.}}
\label{tab:pretraining-routines}
\begin{tabular}{p{.2\textwidth}p{.33\textwidth}p{.37\textwidth}}
\toprule
& Muon & OKLS\\
\midrule
Scoring cutoff & $10^{-5}$ & $10^{-7}$ (relative)\\
Candidates & $49$ PE endpoints in $[10^{-5},10^{-1}]$ at the run's depth $K$ & cubic, $12$ endpoints per decade in $[10^{-7},1]$, at the run's depth $K$\\
Measurement pass & $10$ iterations, $41$ measurements & $10$ iterations, $94$ measurements\\
Measurement period $T$ & $200$ ($500$ also at $160$M and $300$M) & $200$ ($500$ also at $160$M and $300$M)\\
Fixed baseline & $\PE[10^{-5},1]$ & cubic routine for eigenvalues above $10^{-7}\lambda_{\max}$\\
Oracle & \multicolumn{2}{p{.72\textwidth}}{minimax choice from the exact spectrum every $500$ steps}\\
Solver & \multicolumn{2}{p{.72\textwidth}}{Clarabel in float64 on CPU, tolerance $10^{-8}$, one retry at $10^{-11}$}\\
\bottomrule
\end{tabular}
\end{table}

\subsection{Additional pretraining results}
\label{app:pretraining-extra}

\paragraph{Measurement period.}
\Cref{tab:pretraining-period} compares $T=200$, used throughout, with $T=500$,
which we also ran at $160$M and $300$M. Both periods improve on the fixed baseline in
every configuration, and neither is better in all of them.

\begin{table}[!htbp]
\centering\small
\caption{\textbf{Measurement period.} Final validation loss of our method with
$T=200$ and $T=500$.}
\label{tab:pretraining-period}
\begin{tabular}{llcrrr}
\toprule
Optimizer & Size & $K$ & Fixed & $T=200$ & $T=500$\\
\midrule
Muon & 160M & $5$ & $2.761$ & $2.751$ & $\best{2.748}$\\
Muon & 300M & $5$ & $2.662$ & $2.655$ & $\best{2.651}$\\
OKLS & 160M & $5$ & $2.687$ & $2.673$ & $\best{2.671}$\\
OKLS & 300M & $5$ & $2.597$ & $\best{2.590}$ & $2.591$\\
\bottomrule
\end{tabular}
\end{table}

\paragraph{Practitioner endpoint.}
The default Polar Express routine \citep{amsel2026polar} is designed for
$[10^{-3},1]$, a narrower interval than the $[10^{-5},1]$ of our fixed baseline.
\Cref{tab:pretraining-practitioner} compares both fixed routines with our
method at $K=5$. The less conservative endpoint improves slightly on the
fixed baseline at $160$M and matches it at $300$M, and per-matrix selection improves on
both.

\begin{table}[!htbp]
\centering\small
\caption{\textbf{Practitioner endpoint.} Muon, $K=5$, final validation loss.}
\label{tab:pretraining-practitioner}
\begin{tabular}{lrrr}
\toprule
Size & $\PE[10^{-5},1]$ & $\PE[10^{-3},1]$ & Ours\\
 & (baseline) & (default) & \\
\midrule
160M & $2.761$ & $2.765$ & $\best{2.751}$\\
300M & $2.662$ & $2.669$ & $\best{2.655}$\\
\bottomrule
\end{tabular}
\end{table}

\subsection{Selection quality during training}
\label{app:pretraining-validation}

At every measurement we also recorded exact spectra and compared the chosen routine
with the RMS oracle on the same spectra (\cref{tab:pretraining-selection}). The error of spectral
estimation is close to the oracle's and far below the fixed baseline's. \Cref{fig:pretraining-selection}
breaks this down by matrix type. On attention Q and O, the minimax rule picks
much smaller endpoints than the RMS rule, because it protects the worst
direction. On the other matrix types spectral estimation picks smaller, more
conservative endpoints than the RMS oracle.

\Cref{tab:pretraining-offline} repeats the selection on saved training matrices
and varies two things separately. Across columns, measured moments, in bf16 for
Muon and fp32 for OKLS, give nearly the same selection quality as exact ones.
Across rows, the OKLS reductions involving $Y_j$ and $Z_j$ matter, since traces
and norms of $H_j$ alone give clearly worse choices.

\begin{table}[!htbp]
\centering\small
\caption{\textbf{Selection quality during training.} Agreement is the share of
measurements where spectral estimation picks the RMS oracle's endpoint,
distance the median gap between the two endpoints in decades, regret our RMS
error minus the RMS oracle's, and $N$ the number of matrix or factor
measurements. $160$M runs.}
\label{tab:pretraining-selection}
\setlength{\tabcolsep}{3.3pt}
\begin{tabular}{lrrrrrrrrr}
\toprule
& & & & \multicolumn{2}{c}{Endpoint} & \multicolumn{4}{c}{RMS error}\\
\cmidrule(lr){5-6}\cmidrule(lr){7-10}
Optimizer & $K$ & $T$ & $N$ & Agreement (\%) & Distance & Fixed & RMS oracle & Ours & Regret\\
\midrule
Muon & 4 & 200 & 3,864 & 58.6 & 0.00 & 0.6681 & 0.2226 & 0.2254 & 0.0028\\
Muon & 4 & 500 & 1,512 & 61.1 & 0.00 & 0.6603 & 0.2194 & 0.2219 & 0.0025\\
Muon & 5 & 200 & 3,864 & 27.3 & 0.25 & 0.6359 & 0.0967 & 0.1066 & 0.0099\\
Muon & 5 & 500 & 1,512 & 28.2 & 0.25 & 0.6322 & 0.0724 & 0.0828 & 0.0104\\
OKLS & 5 & 200 & 4,704 & 77.8 & 0.00 & 0.6267 & 0.2390 & 0.2394 & 0.0004\\
OKLS & 5 & 500 & 2,016 & 73.8 & 0.00 & 0.6149 & 0.2226 & 0.2232 & 0.0006\\
\bottomrule
\end{tabular}
\end{table}

\begin{figure}[!htbp]
\centering
\includegraphics[width=\textwidth]{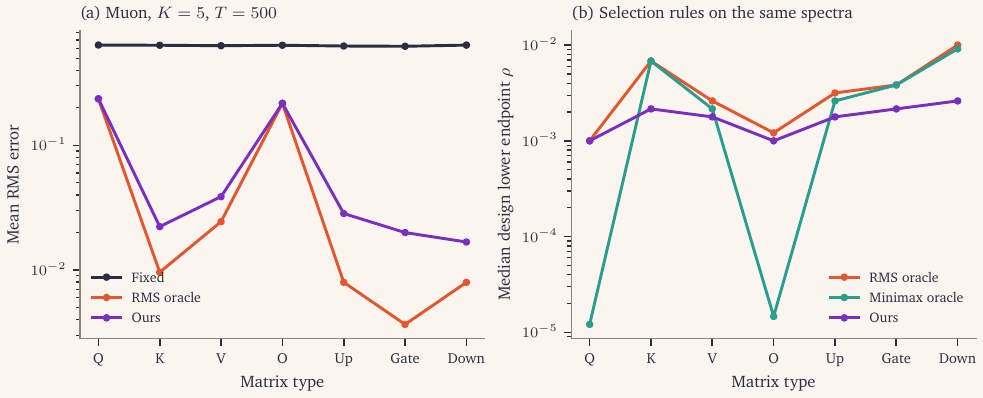}
\caption{\textbf{Selection by matrix type} (Muon $160$M, $K=5$, $T=500$).
\textbf{(a)} Mean RMS error on the recorded exact spectra. \textbf{(b)} Median
design endpoint chosen by spectral estimation and by both oracles on the same
spectra.}
\label{fig:pretraining-selection}
\end{figure}

\begin{table}[!htbp]
\centering\small
\caption{\textbf{Moment precision and measurement set.} Mean regret against
the RMS oracle on saved training matrices. Columns compare moments computed exactly
with moments measured during the pass (bf16 for Muon, fp32 for OKLS), and rows
the reductions used. The exact columns cover $504$ Muon matrices and $1008$
OKLS factors, and the measured ones $504$ and $56$.}
\label{tab:pretraining-offline}
\begin{tabular}{llrrr}
\toprule
Optimizer & Reductions & Count & Exact & Measured\\
\midrule
Muon & all & $41$ & $0.0078$ & $0.0090$\\
OKLS & all & $94$ & $0.0004$ & $0.0008$\\
OKLS & $H_j$ only & $20$ & $0.0019$ & $0.0044$\\
\bottomrule
\end{tabular}
\end{table}

\FloatBarrier
\section{Related work}
\label{app:related}

\paragraph{Polynomial iterations for matrix functions.}
Newton--Schulz iterations compute the polar factor using only matrix products
\citep{bjorck1971iterative,kovarik1970,higham2008functions}, and coupled
versions compute inverse square roots \citep{higham1997stable}. Later work
improves the polynomials. Scaled Newton--Schulz speeds up convergence
\citep{chen2014scaling}, Polar Express designs each iteration by minimax
approximation on a spectral interval \citep{amsel2026polar}, and CANS uses
Chebyshev-type designs \citep{cans2025}. These routines need the spectral
interval in advance and typically use the same one for every matrix. Our method
keeps these polynomial families and chooses the interval and depth from the
spectrum of the matrices being processed.

Other work makes each orthogonalization cheaper. Gram Newton--Schulz iterates
on the smaller Gram matrix with symmetric products \citep{zhang2026gram}, Dion and Dion3 reduce communication in distributed training
\citep{ahn2025dion,amsel2026dion3}, MuonBP orthogonalizes blocks between full
steps \citep{khaled2026muonbp}, and CacheMuon reuses work from earlier steps
\citep{dev2026cachemuon}. These methods lower the cost of an iteration or of a step, whereas spectral estimation chooses which iterations to run and can be
combined with them.

\paragraph{Adapting the iteration to the matrix.}
ROOT tailors coefficients to the matrix
dimensions \citep{he2025root}. AMO observes spectra early in training and then
fixes one routine per matrix type \citep{amo2026}, and \cref{app:reuse} compares
per-matrix selection with this kind of per-type choice. PRISM fits the
polynomial to the current spectrum at every iteration using randomized
sketches \citep{prism2026}. Unlike these works, we read the spectral information from products that
Newton--Schulz already forms, without random probes, and use it to choose a
complete routine for later steps.

\paragraph{Inexact orthogonalization.}
Analyses of Muon with inexact orthogonalization relate the approximation error
to convergence and to the choice of learning rate and momentum
\citep{shulgin2026inexact,kim2026convergence,choudhury2026nesterov,qian2026degenerate,do2026muon}.
Spectral estimation reduces this error at a fixed number of iterations.

\paragraph{Spectral density estimation from moments.}
Estimating a spectral density from traces of matrix polynomials is standard in
numerical linear algebra. The kernel polynomial method \citep{weisse2006kpm},
stochastic Lanczos quadrature \citep{ubaru2017slq,chen2021slq}, and related
methods \citep{lin2016density} estimate these traces with random probe vectors,
and maximum-entropy fits turn a few moments into a density
\citep{mead1984,silver1997entropy}. \citet{fan2020spectrum} use such estimates
to adapt polynomial approximations to the spectrum, which is close in spirit
to our selection step. Finitely many moments do not identify a distribution
\citep{krein1977}, but they determine averages of functions that the measured
polynomials approximate well \citep{braverman2022sublinear}, and our prediction
theorem applies this to the moments that Newton--Schulz produces. Here the moments are traces of matrices the iteration already forms, so they
need neither probe vectors nor extra matrix products.